%% file: paper.tex
\documentclass[twoside]{article}

\usepackage{mathpazo}
\usepackage{twemojis}

\usepackage[final]{manuscript}

\usepackage{eccvabbrv}
\usepackage[utf8]{inputenc}
\usepackage[T1]{fontenc}
\usepackage{hyperref}
\usepackage{url}
\usepackage{booktabs}
\usepackage{amsfonts}
\usepackage{nicefrac}
\usepackage{microtype}
\usepackage{xcolor}
\usepackage{amsmath,amsfonts,amsthm,bm}
\usepackage{cleveref}
\usepackage{thm-restate}
\usepackage{graphicx}
\usepackage{algorithm}
\usepackage[noend]{algpseudocode}
\usepackage{multicol}
\usepackage{multirow}
\usepackage[percent]{overpic}
\usepackage{caption}
\usepackage{subcaption}
\usepackage{listings}
\usepackage{float}
\usepackage[round]{natbib}
\usetikzlibrary{calc}

\newcommand{\ours}{S$^{3}$}

\newtheorem{theorem}{Theorem}

\newtheorem{definition}{Definition}
\newtheorem{example}[definition]{Example}
\theoremstyle{remark}
\newtheorem*{remark}{Remark}

\input{./commands.tex}

\hypersetup{
    colorlinks=true,
    linkcolor=blue,
    filecolor=magenta,      
    urlcolor=cyan,
    citecolor=blue
}

\title{S\texorpdfstring{$^3$}{3}: Structured Sparsity Specification}
\author{Ayoub Ghriss}

\addauthor{Ayoub Ghriss}{Department of Computer Science\\Colorado University, Boulder}{ayoub.ghriss@polytechnique.org}
\begin{document}

\maketitle

\begin{abstract}
  \input{./0_abs.tex}
\end{abstract}

\section{Introduction} \label{sec:intro}
\input{./1_intro.tex}

\section{Related Work} \label{sec:related}
\input{./2_related.tex}

\section{Preliminaries} \label{sec:prelims}
\input{./3_prelims.tex}

\section{The S\texorpdfstring{$^3$}{3} Framework} \label{sec:framework}
\input{./4_framework.tex}

\section{Integration with Pruning Algorithms} \label{sec:integrations}
\input{./5_integrations.tex}

\
\section{Experiments} \label{sec:experiments}
\input{./6_experiments.tex}

\section{Conclusion} \label{sec:conclusion}
\input{./7_conc.tex}

\bibliography{references}

\newpage
\onecolumn
\appendix
\input{./appendix.tex}

\end{document}

%% file: commands.tex
\newcommand{\defeq}{\stackrel{\text{def}}{=}}

\newcommand{\normf}[1]{\left\| #1 \right\|_{\mathrm{F}}}

\def\eqref#1{equation~\ref{#1}}

\def\1#1{\bm{1}_{#1}}

\def\vb{{\bm{b}}}

\def\vd{{\bm{d}}}

\def\vi{{\bm{i}}}
\def\vj{{\bm{j}}}

\def\vo{{\bm{o}}}

\def\vs{{\bm{s}}}

\def\vw{{\bm{w}}}

\DeclareMathAlphabet{\mathsfit}{\encodingdefault}{\sfdefault}{m}{sl}
\SetMathAlphabet{\mathsfit}{bold}{\encodingdefault}{\sfdefault}{bx}{n}

\def\sZ{{\mathbb{Z}}}

\newcommand{\R}{\mathbb{R}}

\newcommand{\T}{\mathcal{T}}
\newcommand{\B}{\mathcal{B}}
\newcommand{\Scope}{\mathcal{S}}
\newcommand{\Layout}{\mathcal{L}}
\newcommand{\View}{\mathcal{V}}
\newcommand{\Z}{\mathbb{Z}}
\newcommand{\N}{\mathbb{N}}
\newcommand{\vW}{\mathbf{W}}
\newcommand{\vH}{\mathbf{H}}
\newcommand{\dom}{\text{dom}}
\newcommand{\cosize}{\text{cosize}}
\newcommand{\idx}{\text{idx}}
\newcommand{\Elements}{\text{Elements}}
\newcommand{\Blocks}{\text{Blocks}}

\newcommand{\CoupledBlock}{\text{CoupledBlock}}
\newcommand{\CoupledScope}{\text{CoupledScope}}

\newcommand{\sthree}{{S$^3$}}

%% file: 0_abs.tex
We introduce the \emph{Structured Sparsity Specification} (\sthree), an algebraic framework for
defining, composing, and implementing structured sparse patterns. \sthree{} specifies sparsity
through three components: a \emph{View} that reshapes the tensor via layout composition, a
\emph{Block} specification that defines the atomic pruning unit, and the sparsity decision
\emph{Scope}. Both Block and Scope support \emph{Coupling} across tensors for coordinated
sparsification. \sthree{} enables precise specification of diverse sparsity structures, from
fine-grained N:M patterns to coarse channel pruning, and integrates seamlessly with Optimal Brain
Damage (OBD) and Surgeon (OBS). We formalize the framework mathematically, demonstrate its
expressiveness on canonical patterns, and validate it experimentally via structured OBS and OBD
implementations\footnote{\href{https://sparsekit.readthedocs.io}{\texttt{https://sparsekit.readthedocs.io}}}
built entirely on \sthree{}, which surpasses well-established second order heuristics on output
reconstruction across common configurations.

%% file: 1_intro.tex
\begin{figure*}[t]
    \centering
  \begin{subfigure}[b]{0.3\linewidth}
    \centering
    \includegraphics[width=\linewidth]{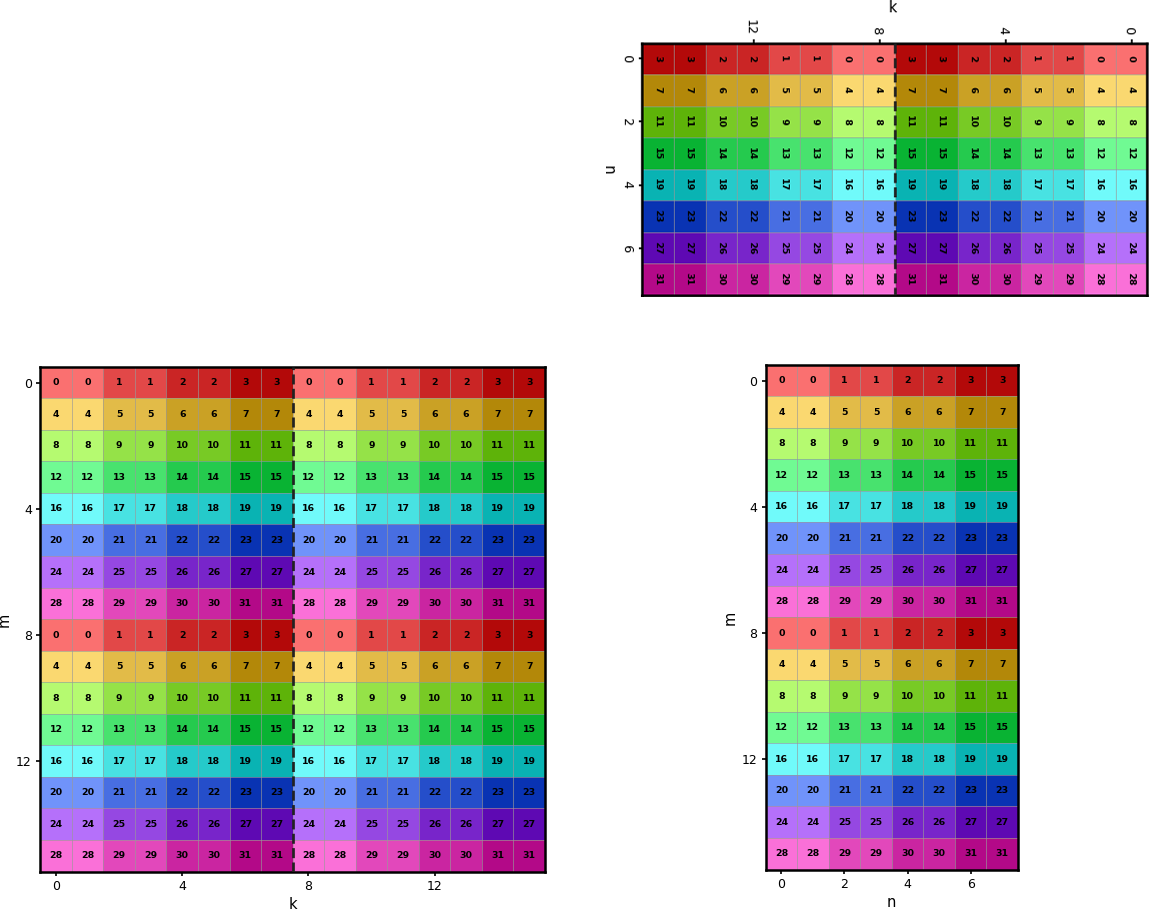}
    \caption{NVIDIA Ampere tensor cores partition the left multiplicand ($16\times 16$) and right multiplicand ($16\times 8$) so that each thread-group of 4 threads provides one row/column of the left/right operand.}
    \vspace{1em}
    \label{fig:wmma_layout}
  \end{subfigure}\hspace{1em}
  \begin{subfigure}[b]{0.5\linewidth}
    \centering
    \includegraphics[width=\linewidth]{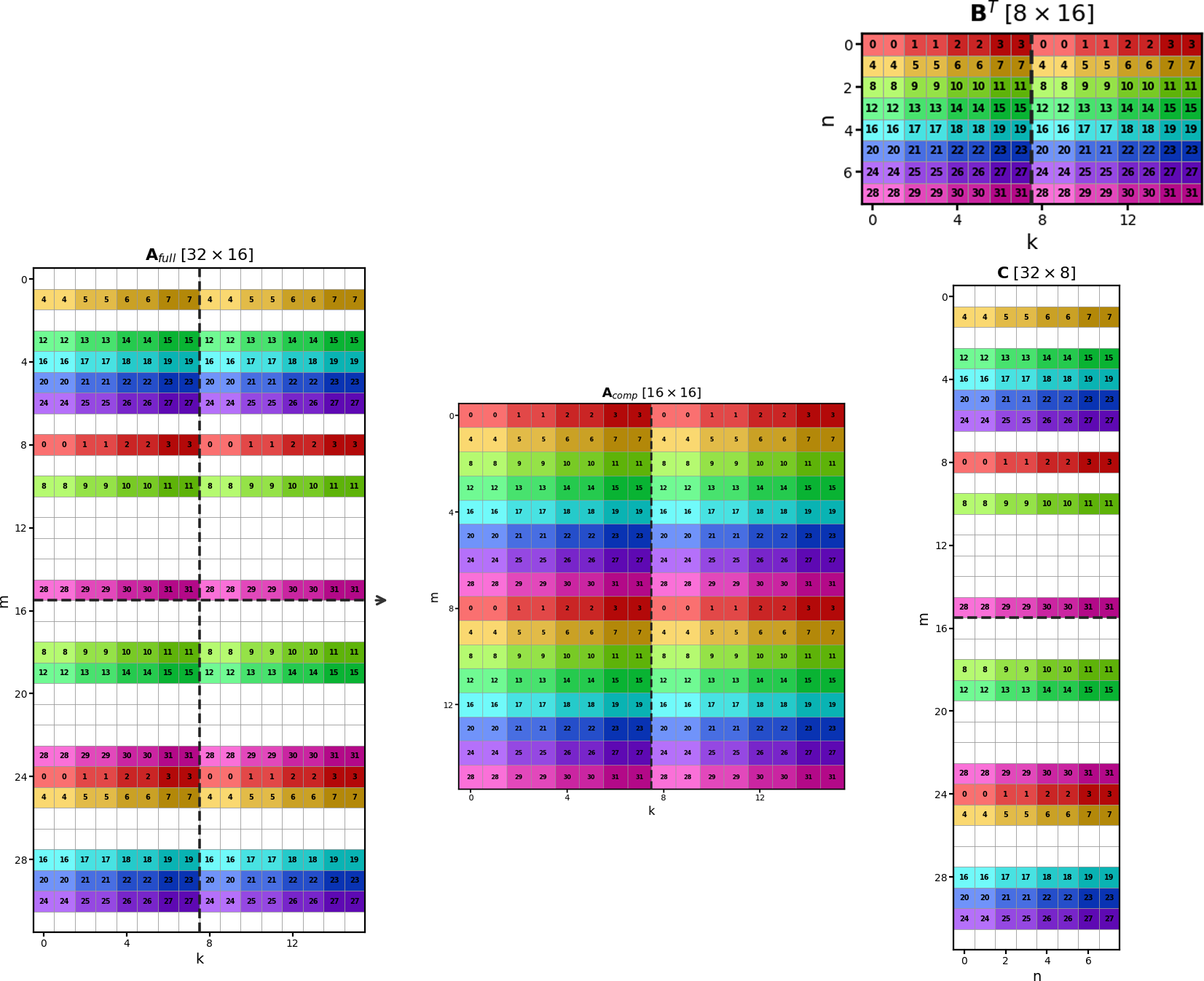}
    \caption{By structuring the sparsity so that each thread-group handles the same exact number of rows per $32\times 16$ left multiplicand, we design a new sparsity pattern that is hardware-native without the need for special hardware support.}
    \label{fig:sparse_wmma}
   \end{subfigure}
   \caption{\sthree{} can target low-level hardware patterns to avoid any slowdowns due to memory access conflicts. For \texttt{m16n8k16} matrix multiplication patterns, we can target the thread-group as one block.}
\end{figure*}
Quantization has become the dominant approach to compressing large neural networks, delivering
substantial inference savings with minimal accuracy loss. Sparsity offers a complementary axis of
compression that can compose with quantization for compounded gains, yet its practical impact has
been limited. Unstructured pruning achieves high theoretical compression ratios, but its irregular
memory access patterns and workload imbalance often negate these benefits on modern
hardware~\citep{structuredsurveyHe24}. Structured sparsity, which removes coherent blocks of
parameters such as channels, filters, or attention heads, is more hardware-friendly but
traditionally lacks the flexibility to express complex patterns, limiting its applicability.

Moreover, the current landscape of structured sparsity is fragmented with the "structure"
qualification used to describe specific patterns: N:M methods target specific hardware (NVIDIA
sparse tensor cores), block sparsity aligns with matrix multiplication tiles, and channel pruning
operates at the semantic level of neural network layers. Each paradigm requires its own pruning
implementation, and no common formalism exists to relate them or to develop algorithms that
generalize across patterns.

We address this gap by introducing Structured Sparsity Specification (\sthree{}), a framework that
unifies diverse sparsity patterns under a common algebraic formalism, composes simple
specifications into complex hierarchical structures, and integrates directly with second-order
pruning criteria through well-defined index mappings. Our contributions:
\begin{itemize}
  \setlength\itemsep{0em}
  \item A formal specification language based on layout algebra that can express any hyperrectangular
        sparsity pattern through View, Block, and Scope primitives.
  \item Structured extensions of OBD and OBS (S-OBD, S-OBS) with derived blockwise saliency scores, optimal
        weight updates, and efficient Schur complement maintenance.
  \item Experimental validation showing that S-OBS implementation, with specification-agnostic code,
        outperforms SparseGPT by 16--20\% across four distinct sparsity configurations.
\end{itemize}

%% file: 2_related.tex
Neural network pruning has evolved from simple magnitude-based methods~\citep{han2015learning} to
principled second-order approaches. Optimal Brain Damage (OBD)~\citep{lecun1989optimal} uses
diagonal Hessian approximations to score parameter importance, while Optimal Brain Surgeon
(OBS)~\citep{hassibi1993optimal} incorporates the full inverse Hessian and derives optimal weight
updates that compensate for removed parameters. The Lottery Ticket
Hypothesis~\citep{frankle2019lottery} provided further motivation by showing that dense networks
contain sparse subnetworks matching full accuracy, suggesting that most parameters are redundant.
For large language models, where retraining is prohibitively expensive, Wanda~\citep{sun2023wanda}
and SparseGPT~\citep{frantar2023sparsegpt} adapt OBD and OBS principles to the zero-shot setting
with efficient Hessian approximations.

A parallel line of work has explored structured sparsity constraints that can be exploited by
hardware. N:M sparsity enforces exactly $N$ non-zero values per $M$ consecutive elements, directly
mapping to NVIDIA's sparse tensor cores~\citep{nvidia2020ampere}. Block sparsity zeros entire
contiguous blocks, enabling efficient dense operations on the surviving blocks~\citep{gray2017gpu}.
Channel and filter pruning remove entire convolutional channels or attention
heads~\citep{li2016pruning,michel2019sixteen}. Early work on structured sparse
coding~\citep{structsparsemairal11a} used sums of $\ell_\infty$
norms~\citep{tibshirani1996regression, chen1998atomic, mallat1999wavelet} to enforce scope-level
consistency, establishing a principled framework to design arbitrary patterns that can be adapted
to today's hardware-aligned structures. Despite this variety, each pattern typically requires its
own pruning implementation; \sthree{} provides a unified specification that covers all of these as
special cases.

Dynamic sparse training~\citep{mocanu2018scalable, evci2020rigging} takes a different approach:
instead of pruning a trained model, it maintains a fixed sparsity budget throughout training while
periodically updating the mask. These methods demonstrate that static masks are suboptimal, but the
resulting patterns are unstructured and cannot leverage hardware-accelerated sparsity formats. We
discuss how \sthree{} can serve as a structural backbone for dynamic sparse training.

Our sparsity formalism was inspired by the CuTe library~\citep{nvidia23cute}, which implements a
layout algebra \citep{colfax25layout} of shape-stride pairs with composition operations for GPU
kernel design. We extend this representation from the domain of memory addressing to sparsity
specification, using layout composition to define views over dense tensors that expose block and
scope structure.

%% file: 3_prelims.tex
Layout Algebra~\citep{nvidia23cute}; see~\citet[Chapter~2]{colfax25layout} for a comprehensive
treatment provides a rigorous tensor layout formalism that we leverage in our framework.

A \textbf{layout} $\Layout = \vs:\vd$ with shape $\vs\in \N^n$ and stride $\vd\in\Z^n$ is a mapping
from a logical coordinate space to linear memory indices.

\begin{definition}[Layout Function]
  \label{def:layout_function}
  For a layout $\Layout = \vs:\vd$ with $n$ dimensions, the index function $\phi_\Layout: \Z^n \to \Z$ is:
  \begin{equation}
    \phi_\Layout(i_0, \ldots, i_{n-1}) = \sum_{k=0}^{n-1} i_k \cdot d_k
  \end{equation}
  where $0 \leq i_k < s_k$ for all $k$.
\end{definition}

\begin{definition}[Layout Size]
  \label{def:layout_size}
  The size of layout $\Layout$ is defined as:
  \begin{equation}
    |\Layout| = \prod_{k=0}^{n-1} s_k
  \end{equation}
\end{definition}

\begin{definition}[Layout Cosize]
  \label{def:layout_cosize}
  The cosize represents the memory footprint:
  \begin{equation}
    \cosize(\Layout) = \max_{\vi \in \dom(\Layout)} \phi_\Layout(\vi) + 1 = \sum_{k=0}^{n-1} (s_k - 1) \cdot d_k + 1
  \end{equation}
\end{definition}

\begin{definition}[Layout Composition]
  \label{def:layout_composition}
  Given layouts $\mathcal{A} = \vs_A:\vd_A$ and $\B = \vs_B:\vd_B$ where $|\mathcal{A}| = |\B|$, the composition $\mathcal{A} \circ \B$ produces a new layout where $\B$ indexes into the logical space of $\mathcal{A}$:
  \begin{equation}
    \phi_{\mathcal{A} \circ \B}(\vj) = \phi_\mathcal{A}(\phi_\B^{-1}(\vj))
  \end{equation}
\end{definition}


The key takeaway is that any tensor can be described by a layout $\Layout = \vs:\vd$ that may
itself be the composition or division of simpler layouts. This compositionality is central to
\sthree{}: it allows us to factor the View out of the Block and Scope specifications.

%% file: 4_framework.tex
\begin{figure*}[!t]
  \centering
  \begin{subfigure}[b]{0.22\linewidth}
    \centering
    \includegraphics[width=\linewidth]{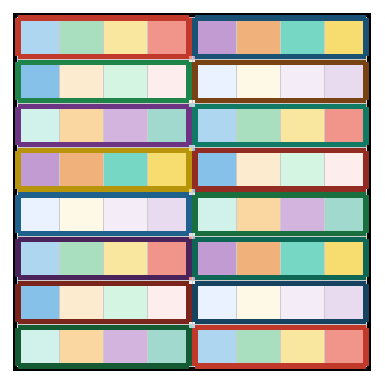}\\[2pt]
    \includegraphics[width=\linewidth]{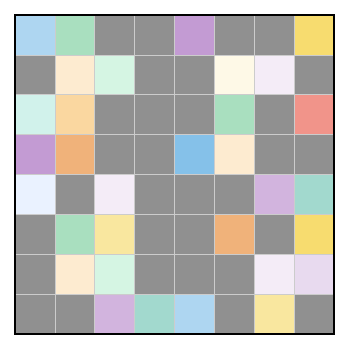}
    \caption{2:4 Sparsity. $\View\!=\!\Layout_{\text{phys}}$,
      $\vb\!=\!(1,1)$, $\vs\!=\!(1,4)$, $k\!=\!2$.
      Scalar blocks; scopes of 4 consecutive columns.}
    \label{fig:app_pat_24}
  \end{subfigure}\hfill
  \begin{subfigure}[b]{0.22\linewidth}
    \centering
    \includegraphics[width=\linewidth]{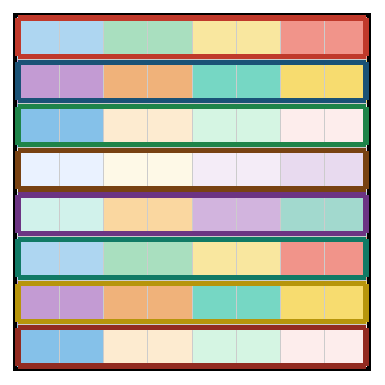}\\[2pt]
    \includegraphics[width=\linewidth]{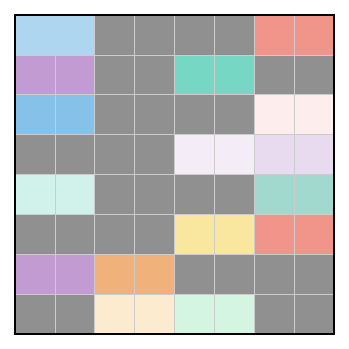}
    \caption{4:8 Sparsity. $\View\!=\!\Layout_{\text{phys}}$,
      $\vb\!=\!(1,2)$, $\vs\!=\!(1,4)$, $k\!=\!2$.
      Column-pair blocks; scopes of 4 blocks.}
    \label{fig:app_pat_48}
  \end{subfigure}\hfill
  \begin{subfigure}[b]{0.2\linewidth}
    \centering
    \includegraphics[width=\linewidth]{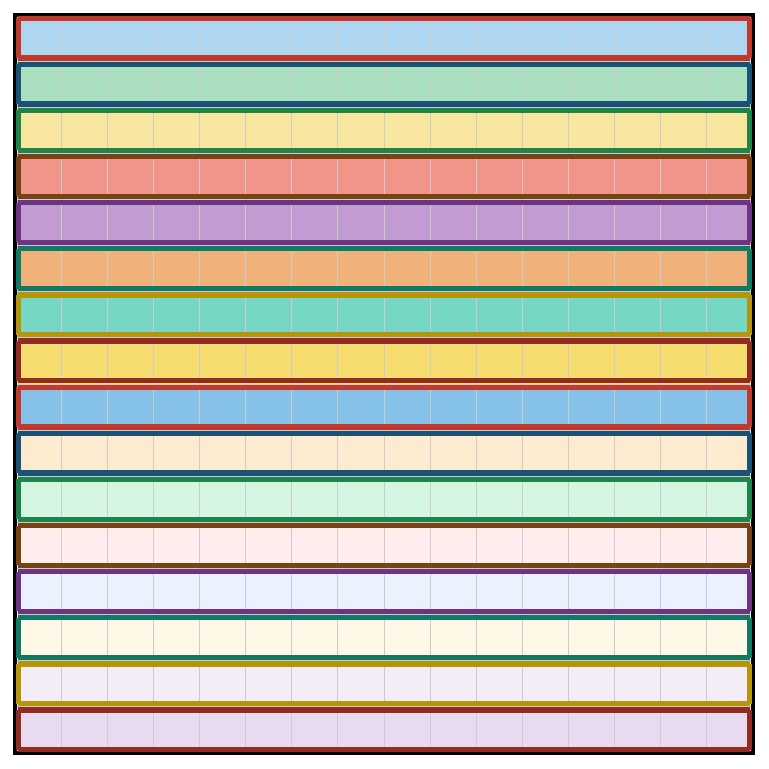}\\[2pt]
    \includegraphics[width=\linewidth]{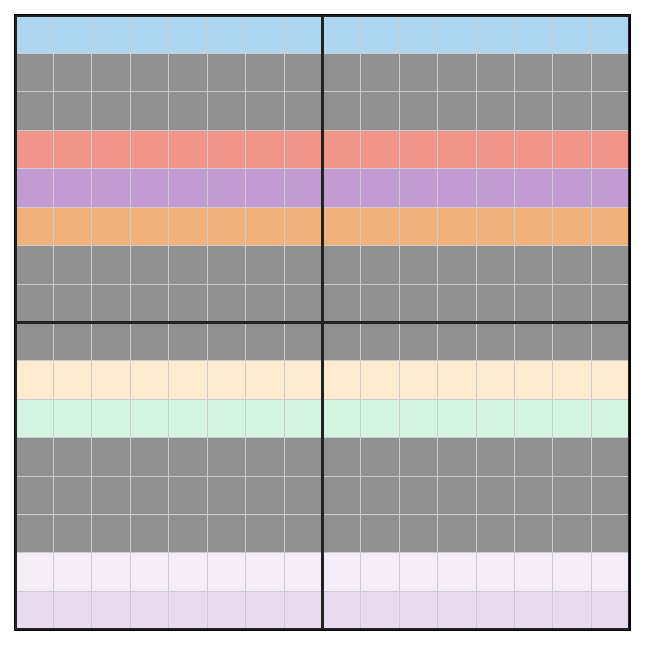}
    \caption{1:2 thread-group coupled sparsity as seen in~\Cref{fig:sparse_wmma},
      $\View\!=\!(2,8,16):(128,\!16,\!1)$,
      $\vb\!=\!(1,1,16)$, $\vs\!=\!(2,1,1)$, $k\!=\!1$.
      16-col blocks; rows 8 apart compete.}
    \label{fig:app_pat_cr}
  \end{subfigure}\hfill
  \begin{subfigure}[b]{0.27\linewidth}
    \centering
    \includegraphics[width=\linewidth]{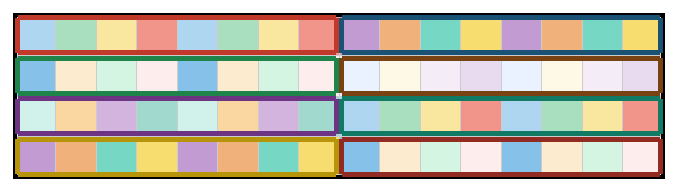}\\[2pt]
    \includegraphics[width=\linewidth]{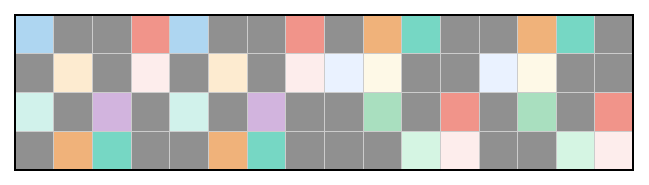}
    \caption{Coupled 2:4.
      $\View\!=\!(M,\frac{K}{8},4,2):(K,8,1,4)$,
      $\vb\!=\!(1,1,1,2)$, $\vs\!=\!(1,1,4,1)$, $k\!=\!2$.
      Column-pair blocks couple cols 4 apart to reduce metadata.}
    \label{fig:app_pat_c24}
  \end{subfigure}
  \caption{Experimental sparsity patterns: structure (top) and pruned result (bottom).
    Cell fill color encodes block identity; colored outlines delimit scopes;
    dark gray cells are pruned to zero.
    All four patterns are expressed by the same three primitives
    (View, Block, Scope) with different shapes, and pruned by the same
    S-OBS algorithm.}
  \label{fig:patterns_app}
\end{figure*}

\sthree{} specifies structured sparsity through three components:
\begin{enumerate}
  \item \textbf{View}: A layout that reshapes the tensor domain for sparsity operations.
  \item \textbf{Block}: The atomic sparsity unit, i.e., the smallest
        set of parameters that are pruned together.
  \item \textbf{Scope}: Defines the sparsity decision scope, i.e., the set of
        blocks over which a sparsity level is enforced.
\end{enumerate}

Both Blocks and Scopes support \textbf{Coupling} to coordinate pruning across multiple tensors.
\sthree{} can operate on any subset of the tensor's domain; elements outside the domain remain
dense. For ease of exposition, we omit the domain from the specification and treat the target
domain as a virtual tensor.

\begin{remark}[View Factorization]
  \label{rem:view_factorization}
  In principle, both Block and Scope admit independent view layouts $\View_B$ and $\View_G$.
  However, by layout composition (Definition~\ref{def:layout_composition}), $\View_G \circ
    \View_B$ yields a single view $\View$. We therefore factor the view out of both
  specifications without loss of generality, resulting in a cleaner three-component
  decomposition: View, Block, Scope.
\end{remark}

\subsection{View}
\label{sec:view}

The View defines how the underlying tensor is logically reorganized before block and scope
boundaries are imposed.

\begin{definition}[View]
  \label{def:view}
  Given a tensor $\T$ with physical layout $\Layout_{\text{phys}} = \vs_{\text{phys}}:\vd_{\text{phys}}$, the View $\View$ is a layout satisfying:
  \begin{equation}
    |\View| = |\Layout_{\text{phys}}|
  \end{equation}
  The view defines a coordinate transformation $\psi: \dom(\View) \to \dom(\Layout_{\text{phys}})$ mapping logical coordinates to physical tensor coordinates.
\end{definition}

\begin{example}
  For a matrix $\vW \in \R^{64 \times 64}$ (see~\Cref{fig:block_sparsity}) with row-major storage $\Layout_{\text{phys}}
    = (64, 64):(64, 1)$, a view for $16\times 16$ block-wise sparsity:
  \begin{equation}
    \View = (4,4,16,16):(512,16,64,1)
  \end{equation}
  This reshapes the matrix into a $4\times 4$ grid of $16\times 16$ blocks, so that the block specification can be expressed easily.
\end{example}

\subsection{Block Specification}
\label{sec:block}

The Block Specification (\texttt{BlockSpec} in the implementation, ~\Cref{appendix:implementation})
represents the extent of each block within the viewed space.

\begin{definition}[Block Shape]
  \label{def:block_shape}
  Given View $\View$ with $n$ dimensions, the Block Shape $\vb = (b_0, b_1, \ldots, b_{n-1})$ satisfies:
  \begin{equation}
    s_k^{(\View)}~\equiv 0 \mod{b_k}\quad, \forall k \in \{0, \ldots, n-1\}
  \end{equation}
  where $s_k^{(\View)}$ is the $k$-th shape component of $\View$.
\end{definition}

The block layout $\Layout_B$ inherits strides from the view: $\Layout_B \defeq \vb:\vd^{(\View)}$,
as such, the Block is simply characterized by its shape $\vb\in\sZ^{n}$.

\begin{definition}[Block Grid]
  \label{def:block_grid}
  The block grid layout is:
  \begin{equation}
    \Layout_{\text{grid}} = \vs^{(B)}:\vd_{\text{grid}},
  \end{equation}
  where the number of blocks along each dimension is $g_k^{(B)} \defeq s_k^{(\View)} / b_k$, and the stride
  to the next dimension $d_{\text{grid},k} \defeq b_k \cdot d_k^{(\View)}$.
\end{definition}

All blocks are assumed to have the same size $|\B| = \prod_{k=0}^{n-1} b_k$ . When the tensor
dimensions are not evenly divisible, the view can pad the domain so that all blocks are uniform.
Since the block is the atomic sparsity unit, \ours{} is agnostic to the choice of view.

\begin{example}
  For the $16\times 16$ block view above, block shape $(1,1,16,16)$ yields a squeezed
  block grid of shape $(8,8)$. The grid strides are $(512,16)$, giving the starting
  address of each block whose elements are strided via $(64,1)$.
\end{example}

\subsection{Scope Specification}
\label{sec:scope}

Scopes organize blocks for sparsification decisions. The Scope operates on the block grid: blocks
are to the Scope what elements are to the Block.

\begin{definition}[Scope Shape]
  \label{def:scope_shape}
  The Scope Shape $\vs = (g_0, g_1, \ldots, g_{m-1})$ defines the extent of each scope over the block
  grid, satisfying divisibility constraints analogous to Block Shape. The number of blocks per scope:
  \begin{equation}\label{eq:block_size_blocks}
    |\Scope|_B = \prod_{k=0}^{m-1} g_k
  \end{equation}
  %
\end{definition}

\begin{definition}[Block-to-Scope Mapping]
  \label{def:block_to_scope}
  The function $\gamma: [0, |\Layout_{\text{grid}}|) \to [0, |\Layout_{\text{grid}}|/|\Scope|_B)$ maps block index to scope index.
\end{definition}

\begin{definition}[Scope Block Enumeration]
  \label{def:scope_block_enumeration}
  For scope $\ell$, the set of block indices is:
  \begin{equation}
    \Blocks(\ell) = \{ j : \gamma(j) = \ell \}
  \end{equation}
\end{definition}

\begin{definition}[Scope Element Enumeration]
  \label{def:scope_element_enumeration}
  For scope $\ell$, the set of element indices is:
  \begin{equation}
    \Elements_G(\ell) = \bigcup_{j \in \Blocks(\ell)} \Elements(j)
  \end{equation}
\end{definition}

\subsection{Coupling}
\label{sec:coupling}



Coupling coordinates sparsity decisions across multiple tensors so that they are pruned jointly.
This is essential for maintaining structural consistency; for example, matching input and output
channels, or pruning attention heads across Q, K, V, and O projections.

Each coupling is defined by a permutation $\pi^{(i)}$ per tensor that reorders the block grid
dimensions. The permutation brings the dimension to be coupled into the last position, so that
blocks at the same grid coordinate (after permutation) are concatenated across tensors.

\begin{definition}[Coupled Block Specification]
  \label{def:coupled_block}
  Given $K$ tensors with block specifications $\B^{(1)}, \ldots, \B^{(K)}$
  and permutations $\pi^{(1)}, \ldots, \pi^{(K)}$, the coupled specification
  $\B_\otimes = \B^{(1)} \otimes \cdots \otimes \B^{(K)}$ requires that the
  permuted grid shapes agree:
  \begin{equation}
    \vs^{(B,1)}_{\pi^{(1)}} = \vs^{(B,2)}_{\pi^{(2)}} = \cdots = \vs^{(B,K)}_{\pi^{(K)}}
  \end{equation}
  where $\vs^{(B,i)}_{\pi^{(i)}}$ denotes the grid shape of $\B^{(i)}$ with
  dimensions reordered by $\pi^{(i)}$.
\end{definition}

For a grid coordinate $\boldsymbol{\ell}$ in the common permuted grid, the coupled block
concatenates elements across tensors:
\begin{equation}
  \CoupledBlock(\boldsymbol{\ell}) = \bigcup_{i=1}^{K}
  \Elements^{(i)}\!\left(\pi^{(i)^{-1}}(\boldsymbol{\ell})\right)
\end{equation}
with coupled block size
$|\B_\otimes| = \sum_{i=1}^{K} |\B^{(i)}|$. Blocks at the same position
compete jointly during pruning.

\begin{definition}[Coupled Scope Specification]
  \label{def:coupled_scope}
  Given $K$ scope specifications $\Scope^{(1)}, \ldots, \Scope^{(K)}$ with
  permutations $\pi^{(1)}, \ldots, \pi^{(K)}$ aligning the scope grids, the
  coupled block at position $\boldsymbol{\ell}$ contains all blocks from
  every tensor at that coordinate:
  \begin{equation}
    \CoupledScope(\boldsymbol{\ell}) = \bigcup_{i=1}^{K} \Blocks^{(i)}\!\left(\pi^{(i)^{-1}}(\boldsymbol{\ell})\right)
  \end{equation}
  Within each coupled block, $k$-sparsity is applied over the concatenated
  block saliencies from all tensors.
\end{definition}

\subsection{Experimental Sparsity Patterns}
\label{sec:exp_patterns}

We illustrate how \sthree{} encodes the four sparsity patterns used in our experiments
(\Cref{sec:experiments}). Additional canonical patterns (unstructured, channel, head pruning,
partial tensor) appear in \Cref{app:patterns}. For a weight matrix $\vW \in \R^{M \times K}$:

\textbf{2:4 sparsity.}
The view is the physical layout ($\View = \Layout_{\text{phys}}$).
Scalar blocks ($\vb = (1,1)$) are blocked into sets of four consecutive
columns ($\vs = (1,4)$), and $k\!=\!2$ blocks are retained per scope.

\textbf{4:8 sparsity.}
The view is again the physical layout. Each block spans two adjacent
columns ($\vb = (1,2)$), and scopes contain four blocks ($\vs = (1,4)$)
with $k\!=\!2$, keeping 4 of every 8 columns at block granularity.

\textbf{Coupled 2:4.}
A strided view pairs columns that are 8 apart within each
16-column segment:
$\View = (M, K/16, 8, 2):(K, 16, 1, 8)$, so that element
$[m, g, i, j]$ maps to $W_{m,\; 16g + i + 8j}$.
Blocks of size $\vb = (1,1,1,2)$ each contain a column pair, and blocks
$\vs = (1,1,4,1)$ collect four such pairs; retaining $k\!=\!2$ enforces
2:4 sparsity over coupled columns. The coupling amortizes mask metadata
over pairs, improving the compression ratio from $9/16$ to $17/32$ for
half-precision weights (\Cref{app:coupled_24_compression}).

\textbf{16-column block sparsity.}
A strided view couples rows that are 8 apart within 16-row chunks:
$\View = (8, 2, K):(K, 8K, 1)$, mapping $[p, r, c]$ to $W_{p+8r,\; c}$.
Blocks of 16 contiguous columns ($\vb = (1,1,16)$) are blocked in
row-pairs ($\vs = (1,2,1)$) with $k\!=\!1$, yielding 50\% block sparsity
where paired rows share the same column mask. This pattern is
hardware-friendly because it preserves contiguous memory access along the
GEMM reduction dimension (\Cref{app:block_sparse_gemm}).

%% file: 5_integrations.tex
\sthree{} provides the structural foundation for pruning algorithms. We derive the key
operations for Optimal Brain Damage (OBD)~\citep{lecun1989optimal} and Optimal Brain Surgeon
(OBS)~\citep{hassibi1993optimal}. While recent methods such as
SparseGPT~\citep{frantar2023sparsegpt} and Wanda~\citep{sun2023wanda} implement efficient
approximations of these criteria for large models, their formulations are tied to specific
sparsity patterns. We return to the foundational OBD/OBS framework and derive structured
extensions (S-OBD, S-OBS) that operate on arbitrary \sthree{} specifications.

\subsection{Block-Level Quantities}

Since $\Elements(j)$ enumerates the indices of block $j$ (see \Cref{appendix:blockspec} for the
formal definition), any per-element quantity lifts to blocks by restricting its indices to
$\Elements(j)$. In particular, the block gradient is $\nabla_j L = (\partial L / \partial w_e)_{e
      \in \Elements(j)}$, the block Hessian is $\vH_j = (\partial^2 L / \partial w_{e_1} \partial
  w_{e_2})_{e_1, e_2 \in \Elements(j)}$, and the cross-block Hessian is $\vH_{j_1, j_2} = (\partial^2
  L / \partial w_{e_1} \partial w_{e_2})_{e_1 \in \Elements(j_1), e_2 \in \Elements(j_2)}$. The same
principle applies to coupled blocks and blocks via their respective element sets.

\subsection{Optimal Brain Damage (OBD)}
We extend OBD saliencies based on diagonal Hessian approximations to block-structured sparsity.

\begin{definition}[Block OBD Saliency]
  \label{def:block_obd_saliency}
  Under a diagonal Hessian approximation, the saliency of block~$j$ is:
  \begin{equation}
    S_j^{\text{OBD}} = \frac{1}{2} \sum_{e \in \Elements(j)} H_{ee} w_e^2
  \end{equation}
\end{definition}

At the scope level, pruning reduces to $k$-sparsity: within each scope $\ell$, retain the $k$
blocks with largest saliency $S_j^{\text{OBD}}$ and prune the rest.

\subsection{Optimal Brain Surgeon (OBS)}

OBS uses the full inverse Hessian for optimal weight updates.

\begin{definition}[Block OBS Saliency]
  \label{def:block_obs_saliency}
  For block $j$ with weight vector $\vw_j$:
  \begin{equation}
    S_j^{\text{OBS}} = \frac{1}{2} \vw_j^T \left( [\vH^{-1}]_{jj} \right)^{-1} \vw_j
  \end{equation}
  where $[\vH^{-1}]_{jj}$ is the diagonal block of the inverse Hessian.
\end{definition}

Saliencies are aggregated across tensors for coupled specifications: $S_\ell^{\text{coup}} =
  \sum_{i=1}^{K} S_\ell^{(i)}$.

\begin{theorem}[Optimal Weight Update for Block Pruning]
  \label{thm:obs_update}
  When pruning block $j$, the optimal update to remaining weights is:
  \begin{equation}
    \delta \vw = -\vH^{-1}_{:,I_j} \left( [\vH^{-1}]_{jj} \right)^{-1} \vw_j
  \end{equation}
  where $I_j = \Elements(j)$ and $\vH^{-1}_{:,I_j}$ are the columns of $\vH^{-1}$ corresponding to block $j$ elements.
\end{theorem}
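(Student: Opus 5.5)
We derive the update as the solution of the classical OBS constrained quadratic program, restricted to the element set $I_j = \Elements(j)$. Around a trained (stationary) point, the gradient term vanishes and the loss change under a perturbation $\delta\vw$ is approximated by the quadratic form $\delta L \approx \frac{1}{2}\,\delta\vw^T \vH\,\delta\vw$, with $\vH$ assumed symmetric positive definite. Pruning block $j$ means forcing every weight indexed by $I_j$ to zero. Let $E_j$ be the $|\B|$-column selection matrix whose columns are the standard basis vectors $e_i$, $i\in I_j$. The constraint is then the linear system $E_j^T(\vw+\delta\vw)=0$, i.e.\ $E_j^T\delta\vw = -\vw_j$. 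The problem is therefore
\begin{equation*}
\min_{\delta\vw}\ \tfrac12\,\delta\vw^T\vH\,\delta\vw \quad\text{subject to}\quad E_j^T\delta\vw + \vw_j = 0 .
\end{equation*}

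We form the Lagrangian $\mathcal{L}(\delta\vw,\boldsymbol\lambda) = \frac12\delta\vw^T\vH\delta\vw + \boldsymbol\lambda^T(E_j^T\delta\vw + \vw_j)$ with a multiplier vector $\boldsymbol\lambda\in\R^{|\B|}$. Stationarity in $\delta\vw$ gives $\vH\delta\vw + E_j\boldsymbol\lambda = 0$, so $\delta\vw = -\vH^{-1}E_j\boldsymbol\lambda$. Substituting into the constraint yields $-E_j^T\vH^{-1}E_j\boldsymbol\lambda = -\vw_j$. We then identify $E_j^T\vH^{-1}E_j = [\vH^{-1}]_{jj}$, the diagonal block of the inverse Hessian, and $\vH^{-1}E_j = \vH^{-1}_{:,I_j}$. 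Hence $\boldsymbol\lambda = ([\vH^{-1}]_{jj})^{-1}\vw_j$, and $\delta\vw = -\vH^{-1}_{:,I_j}([\vH^{-1}]_{jj})^{-1}\vw_j$, which is the claimed formula. As a consistency check, plugging back gives $\delta L = \frac12\vw_j^T([\vH^{-1}]_{jj})^{-1}\vw_j$, which recovers Definition~\ref{def:block_obs_saliency}. We also verify $E_j^T\delta\vw=-\vw_j$, so the block entries land exactly at zero.

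The main obstacle is justifying that the solution is well-defined and optimal rather than merely stationary. This needs $[\vH^{-1}]_{jj}$ to be invertible. We argue it from positive definiteness: $\vH^{-1}$ is positive definite, and a principal submatrix $E_j^T\vH^{-1}E_j$ of a positive definite matrix is positive definite, since $E_j$ has full column rank because the indices in $\Elements(j)$ are distinct. Strict convexity of the objective on the affine feasible set then makes the KKT point the unique global minimizer. If $\vH$ is only semidefinite, as with a calibration Hessian $XX^T$, we would state the result for the damped $\vH+\epsilon I$, as is standard. Finally, we note that the derivation uses nothing about the block's shape beyond the index set $I_j$. The same argument therefore applies verbatim to coupled blocks by taking $I_j=\CoupledBlock(\boldsymbol\ell)$ and letting $\vH$ be the joint Hessian over the coupled tensors.
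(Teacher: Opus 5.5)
Your Lagrangian derivation is correct: it is the classical Hassibi--Stork OBS argument with the single pruned coordinate replaced by the index set $I_j$. The paper states this theorem without writing out a proof and implicitly relies on that standard derivation, so your positive-definiteness argument for $[\mathbf{H}^{-1}]_{jj}$, the uniqueness of the KKT minimizer, and the check that $\delta L$ recovers the block OBS saliency supply exactly what the paper omits. One small slip: the calibration Hessian in the paper is $\frac{1}{N}\mathbf{X}^T\mathbf{X}\in\mathbb{R}^{K\times K}$, not $XX^T$.
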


\subsection{Efficient Block OBS via Schur Complement}

For iterative pruning, we maintain $\vH^{-1}$ efficiently. After pruning block $j$, the updated
inverse Hessian follows from the Schur complement:

\begin{theorem}[Inverse Hessian Update]
  \label{thm:schur_update}
  After pruning block $j$ with index set $I_j$:
  \begin{equation}
    \vH'^{-1} = \vH^{-1} - \vH^{-1}_{:,I_j} \left( [\vH^{-1}]_{jj} \right)^{-1} \vH^{-1}_{I_j,:}
  \end{equation}
\end{theorem}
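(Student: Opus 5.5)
We need to show that the stated formula is the inverse of the Hessian restricted to the surviving coordinates, padded with zeros on the rows and columns indexed by $I_j$. Let $R$ denote the complement of $I_j$. Pruning block $j$ means the quadratic model is now minimized only over $\vw_R$, with $\vw_{I_j}$ fixed at zero. So the object to invert is the principal submatrix $\vH_{RR}$, and the meaning of $\vH'^{-1}$ is
\[
\vH'^{-1} \;=\; \begin{pmatrix} (\vH_{RR})^{-1} & 0 \\ 0 & 0\end{pmatrix}
\]
in the $(R, I_j)$ ordering. This matches the convention used by OBS and SparseGPT, where pruned coordinates are frozen and their rows and columns are dropped.

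The key tool is the block-inverse (Schur complement) identity. Write $\vH^{-1} = \begin{pmatrix} A & B \\ B^T & C\end{pmatrix}$, with $A = [\vH^{-1}]_{RR}$, $B = [\vH^{-1}]_{R I_j}$ and $C = [\vH^{-1}]_{jj}$. The matrix $\vH$ is the inverse of $\vH^{-1}$. Applying the standard block-inversion formula to $\vH^{-1}$ shows that the $RR$ block of $\vH$ satisfies
\[
\vH_{RR} = \bigl(A - B C^{-1} B^T\bigr)^{-1}.
\]
In words, the inverse of a principal block of $\vH$ is the Schur complement of the complementary block in $\vH^{-1}$. Inverting both sides gives $(\vH_{RR})^{-1} = A - B C^{-1} B^T$.

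Next we expand the claimed right-hand side $\vH^{-1} - \vH^{-1}_{:,I_j} C^{-1} \vH^{-1}_{I_j,:}$ block by block. Here $\vH^{-1}_{:,I_j} = \begin{pmatrix} B \\ C \end{pmatrix}$ and $\vH^{-1}_{I_j,:} = \begin{pmatrix} B^T & C\end{pmatrix}$.
\begin{itemize}
  \item The $RR$ block is $A - BC^{-1}B^T$, which equals $(\vH_{RR})^{-1}$ by the identity above.
  \item The $R I_j$ block is $B - BC^{-1}C = 0$.
  \item The $I_j R$ block is $B^T - C C^{-1} B^T = 0$.
  \item The $I_j I_j$ block is $C - C C^{-1} C = 0$.
\end{itemize}
This is exactly the padded inverse described above, so the claim follows.

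The main obstacle is justifying the block-inverse identity and the invertibility assumptions. We assume $\vH$ is symmetric positive definite, as in the OBS setting. Then $\vH^{-1}$ is also positive definite, so its principal block $C$ is invertible. The Schur complement $A - BC^{-1}B^T$ is positive definite, hence invertible. To verify the identity itself, we multiply $\vH$ by $\vH^{-1}$ and read off two block equations: $\vH_{RR}A + \vH_{RI_j}B^T = I$ and $\vH_{RR}B + \vH_{RI_j}C = 0$. The second gives $\vH_{RI_j} = -\vH_{RR} B C^{-1}$. Substituting this into the first yields $\vH_{RR}(A - BC^{-1}B^T) = I$. As a consistency check, with this $\vH'^{-1}$ the update of Theorem~\ref{thm:obs_update} applied to the next block is again the constrained optimum, which justifies iterative pruning.
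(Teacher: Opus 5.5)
Your proof is correct and takes the route the paper indicates. The paper offers no argument beyond saying the update ``follows from the Schur complement,'' and your block-inversion computation is that argument written out, including your explicit reading of $\vH'^{-1}$ as $(\vH_{RR})^{-1}$ padded with zeros on the $I_j$ rows and columns, which the paper leaves implicit (the right-hand side is singular, so it cannot be a literal inverse).
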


\begin{algorithm}[t]
  \caption{Structured OBS Pruning via \sthree}
  \label{alg:scope_obs}
  \begin{algorithmic}[1]
    \Require Weight tensor $\vW$, Hessian inverse $\vH^{-1}$, \sthree{} spec $\mathcal{S}$, blocks to keep $k$
    \Ensure Pruned weights $\vW'$, mask $\mathbf{M}$
    \State $\mathbf{M} \gets \mathbf{1}$ \Comment{Initialize mask}
    \For{each scope $\ell$ in $\mathcal{S}$}
    \For{each block $j \in \Blocks(\ell)$}
    \State $\vw_j \gets \Elements_\mathcal{S}(j)$
    \State $S_j \gets \frac{1}{2} \vw_j^T ([\vH^{-1}]_{jj})^{-1} \vw_j$
    \EndFor
    \State $\mathcal{P} \gets \Blocks(\ell) \setminus \text{top-}k(S_j)$ \Comment{Blocks to prune}
    \For{$j \in \mathcal{P}$ in order of increasing $S_j$}
    \State $\delta \vw \gets -\vH^{-1}_{:,I_{j}} ([\vH^{-1}]_{jj})^{-1} \vw_{j}$
    \State $\vW \gets \vW + \delta \vw$ \Comment{Optimal weight update}
    \State $M_{j} \gets 0$ \Comment{Prune block}
    \State Update $\vH^{-1}$ via Theorem~\ref{thm:schur_update}
    \EndFor
    \EndFor
    \State \Return $\vW \odot \mathbf{M}$, $\mathbf{M}$
  \end{algorithmic}
\end{algorithm}

\subsection{SparseGPT and Wanda as heuristics}

SparseGPT~\citep{frantar2023sparsegpt} uses row-wise OBS with the empirical Hessian $\vH =
  \frac{1}{N}\mathbf{X}^T\mathbf{X}$, where $\mathbf{X} \in \R^{N \times K}$ is the calibration
input. The block extension applies OBS independently per row-block intersection, using
block-diagonal Hessian approximations. Wanda~\citep{sun2023wanda} is a special case that uses
magnitude$\times$activation scoring without compensation (\Cref{appendix:wanda}).
\subsection{Theoretical Analysis} \label{sec:theory}

We now establish formal properties of \sthree{}: what patterns it can express, the computational
cost of block-structured pruning, and what sparsity ratios are achievable under the scope
constraint.

\paragraph{Expressiveness}

We show that \sthree{} is universal over axis-aligned patterns and complete for N:M sparsity.

\begin{theorem}[Universality]
  \label{thm:universality}
  \sthree{} can express any axis-aligned hyperrectangular sparsity pattern over a tensor.
\end{theorem}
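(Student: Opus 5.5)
We first fix what "axis-aligned hyperrectangular sparsity pattern" should mean, so the statement becomes something we can check. Let $\T$ have physical shape $\vs_{\text{phys}} = (s_0,\ldots,s_{n-1})$. Suppose each axis $k$ is partitioned into $g_k$ contiguous intervals of equal length $b_k = s_k/g_k$ (after padding through the View, as in \Cref{sec:block}). The partition of $\T$ into pruning units is then the product partition into hyperrectangles $\prod_k [t_k b_k, (t_k+1) b_k)$. The decision groups are themselves axis-aligned hyperrectangles of such units, with $h_k$ units along axis $k$, and a pattern is any mask that keeps $k$ units per group. Two cases need separate treatment. Non-uniform rectangles will be handled by partial-tensor domains or by a union of specifications, since elements outside the domain stay dense. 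Strided rectangles, such as rows $8$ apart, will be handled by the View. The proof then reduces to exhibiting a View, a Block shape and a Scope shape that realise an arbitrary pattern of this form.

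\textbf{Step 1 (View).} We take $\View = \Layout_{\text{phys}}$, or, when the units are strided or permuted, a View obtained by splitting each mode $s_k$ into $(s_k/b_k, b_k)$ with strides $(b_k d_k, d_k)$. Any further reindexing is composed in using Definition~\ref{def:layout_composition}, as justified by Remark~\ref{rem:view_factorization}. Since $|\View| = |\Layout_{\text{phys}}|$, Definition~\ref{def:view} is satisfied.

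\textbf{Step 2 (Block).} We set the block shape to $\vb = (b_0,\ldots,b_{n-1})$. Divisibility (Definition~\ref{def:block_shape}) holds by construction. The block grid (Definition~\ref{def:block_grid}) has shape $g_k$ and strides $b_k d_k$. We then verify that $\Elements(j)$ for grid coordinate $(t_0,\ldots,t_{n-1})$ is exactly the hyperrectangle $\prod_k [t_k b_k,(t_k+1)b_k)$. This follows by unfolding $\phi_{\Layout_{\text{grid}}}$ and $\phi_{\Layout_B}$: every index decomposes uniquely as $i_k = t_k b_k + r_k$ with $0 \le r_k < b_k$.

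\textbf{Step 3 (Scope).} We apply the identical argument one level up, on the block grid. Setting the scope shape to $(h_0,\ldots,h_{n-1})$ with $h_k \mid g_k$ makes $\Blocks(\ell)$ (Definition~\ref{def:scope_block_enumeration}) the axis-aligned box of units. We choose $k$ as the per-scope budget. For an arbitrary mask with no per-group constraint, we take a single scope covering the whole grid, with $k$ equal to the number of retained units. Top-$k$ selection on a chosen saliency then recovers any prescribed mask within the scope.

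\textbf{Main obstacle.} The difficulty is Step 2's bijection claim together with the precise scoping of the statement. Definition~\ref{def:layout_composition} and the grid definition are stated only informally, so we must show that the grid and block layouts tile $\dom(\View)$ disjointly and exhaustively. Concretely, we need $\phi$ on the combined $(g_k, b_k)$ coordinates to equal $\phi_\View$, which is a mixed-radix argument. We would state this as a lemma: each $\vi \in \dom(\View)$ has exactly one pair (block, offset) with $i_k = t_k b_k + r_k$. We would also state explicitly that "express" means realising the partition into units and groups, so that any mask constrained to that partition is attainable. The degenerate cases then follow as special cases of the construction: $\vb = \mathbf{1}$ gives unstructured sparsity, and $\vb = \vs_{\text{phys}}$ gives whole-tensor pruning.
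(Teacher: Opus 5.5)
Your proof shares the paper's core idea, that an axis-aligned hyperrectangle is realized as an \sthree{} block, but you put the work in a different place.

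\textbf{How the two routes differ.} The paper lets the View do everything: it reshapes the tensor so the hyperrectangle becomes a contiguous block, and it appeals to views being ``arbitrary (subject to size preservation).'' You keep $\View=\Layout_{\text{phys}}$ and let the block shape carry the extents. This already suffices, because $\Layout_B=\vb:\vd^{(\View)}$ inherits the view strides, so no contiguity is needed. Your division-with-remainder lemma $i_k=t_kb_k+r_k$ is exactly Theorem~\ref{thm:block_to_element}. You also make the Scope level explicit, and you note that top-$k$ on an indicator saliency realizes every admissible mask; the paper states neither.

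\textbf{What your route buys.} Your argument can actually be checked. A flat shape:stride layout is far from an arbitrary bijection. Moreover, $\Elements(\vj)=\phi_{\Layout_{\text{grid}}}(\vj)+\{\sum_k o_k d_k^{(\View)} : 0\le o_k<b_k\}$, with an offset set independent of $\vj$, so all blocks of one specification are translates of each other. Hence the paper's ``views are arbitrary'' step is literally valid only for the uniform product tilings you isolate.

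\textbf{What needs repair.} Everything outside that setting is compressed into one sentence, and that sentence has problems.
\begin{enumerate}
  \item The split view of Step~1 reproduces the same contiguous tiling, with block shape $(1,b_0,1,b_1,\ldots)$ rather than $\vb$. So it is not what handles strided units. Strided units lie outside the statement anyway, and the identity view suffices.
  \item A plain union of independent specifications over disjoint domains cannot make units of different sizes compete under one budget $k$, since each specification enforces its own per-scope $k$.
  \item Your case list omits tilings by equal-size rectangles that are not product tilings. An example is a $2\times 4$ matrix whose left $2\times 2$ half is cut into horizontal dominoes and whose right half is cut into vertical ones. By the translate property, no single specification expresses it.
\end{enumerate}
Points 2 and 3 have one fix. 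Make each rectangle its own domain (Definition~\ref{def:domain}), treated as a virtual tensor, whose single block is the whole domain. Then couple these one-block scopes via Definition~\ref{def:coupled_scope}, so that $k$-sparsity is applied jointly over their saliencies.
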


\begin{proof}
  Any axis-aligned hyperrectangle in tensor coordinates can be mapped to a block via an appropriate view. The view reshapes the tensor such that the hyperrectangle aligns with a contiguous block. Since views are arbitrary (subject to size preservation), any such pattern is expressible.
\end{proof}

\begin{theorem}[N:M Completeness]
  \label{thm:nm_completeness}
  For any $N < M$, the N:M sparsity pattern is expressible in \sthree{}.
\end{theorem}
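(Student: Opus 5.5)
The plan is to give an explicit \sthree{} specification, mirroring the 2:4 example of \Cref{sec:exp_patterns}, and then check that its feasible masks are exactly the N:M masks. We fix the convention that N:M means exactly $N$ nonzeros in every group of $M$ consecutive elements along a chosen reduction axis. Take a tensor $\T$ with physical layout $\Layout_{\text{phys}} = \vs_{\text{phys}}:\vd_{\text{phys}}$. We assume the grouped axis $a$ has extent divisible by $M$, since otherwise the view pads the domain, as allowed in \Cref{sec:block}.

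\textbf{Step 1 (View).} Split axis $a$ into a pair of modes of shape $(s_a/M, M)$ with strides $(M d_a, d_a)$, leaving all other modes unchanged. This defines a layout $\View$ with $|\View| = |\Layout_{\text{phys}}|$, so it is a valid View by Definition~\ref{def:view}. Its coordinate map $\psi$ sends $(\dots, q, r, \dots)$ to axis-$a$ index $qM + r$. Equivalently, one may keep $\View = \Layout_{\text{phys}}$ and impose grouping purely through the scope shape, as the 2:4 example does. I would present the split form because it makes the divisibility checks immediate.

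\textbf{Step 2 (Block and Scope).} Choose scalar blocks $\vb = (1,\dots,1)$. Definition~\ref{def:block_shape} then holds trivially, and the block grid coincides with the element grid, with $|\B| = 1$. Choose the scope shape $\vs$ equal to $M$ on the inner split mode and $1$ elsewhere. Divisibility holds by construction, and $|\Scope|_B = M$ by \eqref{eq:block_size_blocks}. Finally set $k = N$ blocks retained per scope.

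\textbf{Step 3 (Correctness).} Show that $\gamma$ partitions blocks so that each $\Blocks(\ell)$ is exactly $\{(\dots,q,r,\dots) : 0 \le r < M\}$ for fixed values of the other coordinates. Pushing this set through $\psi$ gives precisely $M$ consecutive elements along axis $a$. Hence the family $\{\Elements_G(\ell)\}$ (Definition~\ref{def:scope_element_enumeration}) is exactly the family of N:M groups. Retaining $k = N$ scalar blocks per scope then keeps exactly $N$ of every $M$ consecutive entries. Conversely, every N:M mask arises from some choice of $N$ blocks per scope. The scope-level $k$-sparsity constraint therefore characterizes the N:M masks exactly, and $N < M$ guarantees $k < |\Scope|_B$, so pruning is nontrivial.

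The main obstacle is Step 3. The paper leaves $\gamma$ abstract in Definition~\ref{def:block_to_scope}, so I would first fix it as the canonical map: integer-divide each grid coordinate by the corresponding scope extent, then linearize. Under that definition, showing that scopes correspond to consecutive runs along axis $a$ is a direct coordinate computation using $\psi$. Once this is in place, the generalizations are short remarks. Groups along a strided or non-innermost axis only require changing the stride in Step 1. The block-level variant (e.g.\ 4:8 as column pairs) follows by taking $b_a > 1$ and $k = N/b_a$, whenever $b_a$ divides both $N$ and $M$.
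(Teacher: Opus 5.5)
Your proposal is correct and follows the same route as the paper's proof. Both use the View to expose a mode of extent $M$ along the grouped axis, take scalar blocks $\vb = (1,\ldots,1)$, let each scope cover that mode, and retain $k = N$ blocks per scope. Your version adds what the paper's one-line argument leaves implicit: a canonical choice of $\gamma$, the coordinate check that each scope maps to $M$ consecutive entries, and the converse that every N:M mask is realized.
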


\begin{proof}
  Set $\View$ to reshape the tensor with innermost dimension $M$. Set $\vb = (1, \ldots, 1)$ (scalar blocks). Set $\vs$ to block $M$ consecutive elements. The scope constraint enforces exactly $N$ non-zero per $M$ elements.
\end{proof}

The scope constraint discretizes the set of achievable sparsity ratios.

\begin{theorem}[Achievable Sparsity]
  \label{thm:achievable_sparsity}
  For \sthree{} specification with block size $|\Scope|_B$ blocks:
  \begin{equation}
    \rho \in \left\{ \frac{k}{|\Scope|_B} : k \in \{0, 1, \ldots, |\Scope|_B\} \right\}
  \end{equation}
  where $\rho$ is the achievable block sparsity ratio within each scope.
\end{theorem}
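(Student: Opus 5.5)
The plan is to unpack what a scope decision is and count the possible outcomes. By the Scope Block Enumeration definition (Definition~\ref{def:scope_block_enumeration}), each scope $\ell$ owns the block set $\Blocks(\ell)$. The Scope Shape definition (Definition~\ref{def:scope_shape}), together with the divisibility constraints inherited from the Block Shape, makes every scope cover exactly $|\Scope|_B = \prod_k g_k$ blocks of the grid. I will verify this first. The map $\gamma$ partitions $[0, |\Layout_{\text{grid}}|)$ into $|\Layout_{\text{grid}}|/|\Scope|_B$ fibers, and each fiber is the set of grid coordinates whose quotient by $\vs$ coordinatewise is fixed. Because $g_k$ divides the grid extent in each dimension, every fiber is a full hyperrectangle of extent $\vs$. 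Its cardinality is therefore exactly $|\Scope|_B$, with no ragged boundary scopes (any padding is absorbed into the view, as noted after Definition~\ref{def:block_grid}).

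Second, I will formalize the sparsity decision inside a scope. Blocks are atomic (Definition~\ref{def:block_shape} and the remark that the block is the smallest unit pruned together), so a mask restricted to scope $\ell$ is a function $M:\Blocks(\ell)\to\{0,1\}$. Scope-level $k$-sparsity, as used in S-OBD and in Algorithm~\ref{alg:scope_obs}, retains the top-$k$ blocks and prunes the rest. The block sparsity ratio within the scope is then the number of pruned blocks divided by $|\Blocks(\ell)| = |\Scope|_B$. This count is an integer between $0$ and $|\Scope|_B$, which gives $\rho = k'/|\Scope|_B$ with $k' = |\Scope|_B - k \in\{0,\dots,|\Scope|_B\}$. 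Since $k' \mapsto$ the value is a bijection onto the stated set, the indexing of the theorem is recovered regardless of whether one counts kept or pruned blocks.

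Third, for the reverse inclusion (every listed value is achieved), I fix any $k\in\{0,\dots,|\Scope|_B\}$. Running top-$k$ selection on arbitrary saliencies, with ties broken by block index, prunes exactly $|\Scope|_B-k$ blocks. This realizes the complementary ratio, and as $k$ ranges the whole set is realized. For coupled scopes (Definition~\ref{def:coupled_scope}), the same argument applies with $\CoupledScope(\boldsymbol{\ell})$ in place of $\Blocks(\ell)$. Its size is again $|\Scope|_B$, because the permuted scope grids agree, so each coupled block counts once.

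The main obstacle is the first step. It requires pinning down $\gamma$ concretely enough to prove that every fiber has exactly $|\Scope|_B$ elements, since the paper only states the codomain size. I would define $\gamma$ explicitly as the composition of grid-coordinate unflattening, coordinatewise integer division by $\vs$, and reflattening. Uniformity then follows from the divisibility hypothesis.
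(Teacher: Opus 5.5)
Your argument is correct for the uncoupled specification. The paper states this theorem without proof, treating it as immediate from retaining $k$ of the $|\Scope|_B$ blocks in each scope, and your second and third steps make exactly that count explicit. Your remark that the target set is invariant under $\rho\mapsto 1-\rho$ also cleanly absorbs the paper's use of the keep-count $k$ as the numerator of a ``sparsity'' ratio. Your first step is a consistency check rather than a real obstacle: the hypothesis already fixes scopes of $|\Scope|_B$ blocks, and the scope-shape divisibility makes the coordinatewise-quotient $\gamma$ the natural choice.

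The claim you should fix is the coupled-scope aside. By definition, $\CoupledScope(\boldsymbol{\ell})=\bigcup_{i=1}^{K}\Blocks^{(i)}\bigl(\pi^{(i)^{-1}}(\boldsymbol{\ell})\bigr)$ is a union of blocks belonging to different tensors, hence disjoint. Since $k$-sparsity is applied over the concatenated saliencies, the competing pool has $\sum_{i=1}^{K}|\Scope^{(i)}|_B$ members, not $|\Scope|_B$. Agreement of the permuted scope grids equates only the number of scopes along each axis, not the number of blocks per scope. For example, block grids $(4,8)$ and $(4,4)$ with scope shapes $(1,4)$ and $(1,2)$ both give scope grid $(4,2)$; the coupled scope then holds $6$ blocks, and a ratio such as $1/6$ is achievable. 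Even with equal per-tensor scope sizes, the pool has $K|\Scope|_B$ members. Your counting argument goes through verbatim with denominator $|\CoupledScope(\boldsymbol{\ell})|$. The reading in which ``each coupled block counts once'' is block coupling, where blocks at the same block-grid coordinate are merged with summed saliency. There it is agreement of the block grids, not the scope grids, that yields $|\Scope|_B$ units per scope. Since the theorem itself does not involve coupling, either drop the aside or restate it with the correct denominator.
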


\paragraph{Complexity Analysis}

The block structure determines the cost of OBS-based pruning.

\begin{theorem}[Block OBS Complexity]
  \label{thm:obs_complexity}
  Block OBS with $B$ blocks of size $b$ requires $O(B \cdot b^3)$ saliency computation through block Hessian inverse, $O(B^2 \cdot b^2)$ weight updates, and  $O(k \cdot B \cdot b^3)$ pruning iterations with Schur updates for $k$ blocks.
\end{theorem}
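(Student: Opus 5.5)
The plan is to count operations for the three phases of Algorithm~\ref{alg:scope_obs} separately, using the block-level quantities defined above. Throughout, the total number of parameters is $n = B\cdot b$, and every block has exactly $b$ elements, since all blocks are uniform (Definition~\ref{def:block_shape}). I treat $\vH^{-1}$ as given, so only block extraction, block inversion and rank-$b$ corrections are charged.

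\textbf{Saliency.} For each block $j$, form the $b\times b$ principal submatrix $[\vH^{-1}]_{jj}$ by gathering the indices $I_j=\Elements(j)$; this costs $O(b^2)$. Then factor it (Cholesky or LU) in $O(b^3)$. With the factorization in hand, the quadratic form $\vw_j^T([\vH^{-1}]_{jj})^{-1}\vw_j$ of Definition~\ref{def:block_obs_saliency} costs $O(b^2)$. Summing over $B$ blocks gives $O(B b^3)$, since $b^2\le b^3$.

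\textbf{Weight updates.} By Theorem~\ref{thm:obs_update}, one update first computes $\mathbf{u}=([\vH^{-1}]_{jj})^{-1}\vw_j$. This takes $O(b^2)$ by reusing the saliency factorization. It then multiplies the $n\times b$ column slab $\vH^{-1}_{:,I_j}$ by $\mathbf{u}$, which costs $O(nb)=O(Bb^2)$. Performed for up to $B$ blocks, this totals $O(B^2b^2)$. I will state explicitly that this bound covers updating every block, whereas the pruned subset only improves it.

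\textbf{Schur updates.} By Theorem~\ref{thm:schur_update}, each pruned block requires the outer product $\vH^{-1}_{:,I_j}(\cdot)^{-1}\vH^{-1}_{I_j,:}$. After a re-inversion of the current $b\times b$ diagonal block in $O(b^3)$, the product is naively $O(n^2 b)=O(B^2b^3)$ per block.

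\textbf{Main obstacle.} Matching the claimed $O(kBb^3)$ is the hard part, because the naive Schur cost above is too large. My first step is to argue that only the diagonal blocks $[\vH^{-1}]_{j'j'}$ of the blocks still competing in the current scope are needed for subsequent saliencies and updates, together with the slab used for $\delta\vw$. Updating a single $b\times b$ diagonal block costs $O(b^3)$: form a $b\times b$ cross block in $O(b^2)$ per factor, then take products in $O(b^3)$. Doing this for all $B$ blocks gives $O(Bb^3)$ per pruning step, hence $O(kBb^3)$ over $k$ pruned blocks, which is the stated bound.

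If the full-matrix maintenance must be charged, this approach does not give $O(kBb^3)$. In that case I would fall back to stating the bound under the scope-local, block-diagonal maintenance that Algorithm~\ref{alg:scope_obs} effectively uses, so that cross-scope blocks are never touched. I would also make explicit that $k$ counts pruned blocks.
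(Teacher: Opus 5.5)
The paper states this theorem without proof, so your proposal has to stand on its own. Your saliency count of $O(Bb^3)$ is fine. Your weight-update count of $O(nb)=O(Bb^2)$ per pruned block, $O(B^2b^2)$ overall, is also fine. One small fix: the factorization from the saliency pass goes stale once another block of the same scope has been pruned, because the Schur update changes both $[\vH^{-1}]_{jj}$ and $\vw_j$. The $O(b^3)$ re-inversion you charge to the Schur phase covers this.

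\textbf{The gap is in the third part.} Updating the diagonal block of $j'$ after pruning $j$ needs the \emph{current} cross block $[\vH^{-1}]_{j'j}$, and $\delta\vw$ needs the current slab $\vH^{-1}_{:,I_j}$.
\begin{itemize}
\item ``Forming the cross block in $O(b^2)$'' presupposes these are stored up to date. That means maintaining the full matrix, which is exactly the $O(B^2b^3)$-per-step Schur update you are trying to avoid.
\item If you keep only diagonal blocks, the slab of the $t$-th pruned block must be rebuilt from the original $\vH^{-1}$ and the $t-1$ stored rank-$b$ factors. That costs $O(tBb^3)$ at step $t$, so $O(k^2Bb^3)$ in total.
\end{itemize}
The better of the two options gives $O(kBb^3\min(k,B))$, not $O(kBb^3)$.

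\textbf{This is not a bookkeeping issue you can patch.} Take $b=1$, one global scope, and $k=n/2$ pruned weights; the claimed total is then $O(n^2)$. Sequential and joint OBS coincide, so Algorithm~\ref{alg:scope_obs} returns $\vw_R-[\vH^{-1}]_{RP}([\vH^{-1}]_{PP})^{-1}\vw_P$ on the kept set $R$. Now choose:
\begin{itemize}
\item $[\vH^{-1}]_{PP}=M$, $[\vH^{-1}]_{RP}=\epsilon I$, $[\vH^{-1}]_{RR}=I$, with $\epsilon$ small enough for positive definiteness;
\item $\vw_P=c$;
\item $\vw_R$ large enough that $R$ is kept.
\end{itemize}
The output then reveals $M^{-1}c$ for an arbitrary SPD $M$. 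So the stated bound would give a quadratic-time solver for dense SPD systems, which is not known.

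\textbf{Your fallback does not rescue the statement either.} Algorithm~\ref{alg:scope_obs} and the released code update the whole per-row inverse, $C_m \mathrel{-}= C_m[:,I_j]\,[C_m]_{jj}^{-1}\,C_m[I_j,:]$, touching every scope. ``Scope-local maintenance'' is therefore a different algorithm.
\begin{itemize}
\item With a scope-local inverse the cost is $O(k|\Scope|_B^2b^3)$. This is within $O(kBb^3)$ only when $|\Scope|_B^2=O(B)$.
\item With a fully block-diagonal inverse the update merely zeroes $\vw_j$, so there is no compensation at all, as in S-OBD.
\end{itemize}

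\textbf{What you can actually prove.} Your argument establishes the first two bounds. For the third, state either $O(kB^2b^3)$ (full Schur updates, as implemented) or $O(k^2Bb^3)$ (low-rank replay). Otherwise, add the restricting hypothesis explicitly.
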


%% file: 6_experiments.tex
We validate \sthree{} by implementing the Structured OBS (S-OBS) entirely through the framework's
View, Block, and Scope abstractions. All sparsity patterns below are specified as \sthree{}
configurations and pruned by the same algorithm (Algorithm~\ref{alg:scope_obs}).

We focus on zero-shot (post-training) pruning, where a pre-trained model is sparsified in one pass
using only a small calibration set, without any retraining or fine-tuning. This setting is the most
practical for large language models, where retraining is prohibitively expensive. Following the
experimental protocol of Wanda~\citep{sun2023wanda}, we use 1024 calibration samples of 1024 tokens
from the English scope of the Colossal Clean Crawled Corpus (C4)~\citep{c4dataset} and evaluate
perplexity on WikiText-2~\citep{wikitextdataset}.

\paragraph{Setup} We use SparseGPT~\citep{frantar2023sparsegpt} as our baseline and report the relative output Frobenius norm ${\normf{\mathbf{X}(\hat{\vW}\!-\!\vW)^T}}/{\normf{\mathbf{X}\vW^T}}$\,, where $\hat{\vW}$ is the pruned weight matrix. All experiments run on a single NVIDIA A100 (40GB), with S-OBS implemented via \sthree{} (per-row $\vH^{-1}$ with Schur updates).
  
\begin{table}[t]
  \centering
  \small
  \begin{tabular}{@{}llccc@{}}
    \toprule
    Model                       & Method    & PPL ($\downarrow$) & PPL Increase & Time \\
    \midrule
    \multirow{4}{*}{Qwen3-0.6B} & Dense     & 26.13              & —            & —    \\
                                & S-OBD     & 369.78             & $+$1315.1\%  & 79s  \\
                                & SparseGPT & \textbf{126.23}    & $+$383.0\%   & 165s \\
                                & S-OBS     & 154.16             & $+$489.9\%   & 223s \\
    \midrule
    \multirow{4}{*}{Qwen3-1.7B} & Dense     & 21.05              & —            & —    \\
                                & S-OBD     & 125.75             & $+$497.4\%   & 167s \\
                                & SparseGPT & 56.36              & $+$167.7\%   & 352s \\
                                & S-OBS     & \textbf{46.26}     & $+$119.8\%   & 963s \\
    \midrule
    \multirow{3}{*}{Qwen3-4B}   & Dense     & 16.44              & —            & —    \\
                                & SparseGPT & 30.11              & $+$83.1\%    & 435s \\
                                & S-OBS     & \textbf{27.93}     & $+$69.9\%    & 4206s \\
    \bottomrule
  \end{tabular}
  \caption{End-to-end 2:4 pruning of Qwen3 models.
    WikiText-2 word perplexity, 1024 C4 calibration samples. S-OBS achieves lower
    perplexity on Qwen3-1.7B and Qwen3-4B but not on Qwen3-0.6B, despite lower
    per-linear errors on all three (\Cref{sec:appendix_0.6b,sec:appendix_linear_type}).}
  \label{tab:e2e}
\end{table}

We run a toy experiment where we extract the first linear layer from Qwen3-4B~\citep{qwen3}: weight
matrix $\vW \in \R^{2560 \times 9728}$ and calibration inputs $\mathbf{X} \in \R^{244449 \times
    9728}$.

\Cref{tab:results} summarizes results across four sparsity configurations, each expressed as a
different \sthree{} specification (see \Cref{app:patterns} for the full specifications) but pruned
by the same S-OBS implementation. All four patterns use the same pruning algorithm (Algorithm~\ref{alg:scope_obs}) with different
\sthree{} specifications. No pattern-specific code is required.

S-OBS consistently outperforms SparseGPT by 16--20\%. The improvement is remarkably stable across
all four patterns, from fine-grained 2:4 to coarse 16-column block sparsity, demonstrating that
per-row Hessian compensation generalizes across structures.

The quality gap comes from per-row compensation, not mask selection. Replacing SparseGPT's diagonal
mask selection ($w^2/d^2$) with exact $\binom{4}{2}$ enumeration yields no improvement. The gap
arises entirely from maintaining a per-row $\vH^{-1}$ with Schur updates, versus SparseGPT's shared
column-sequential approximation.

\subsection{End-to-End LLM Pruning}
\label{sec:e2e}

We evaluate S-OBS on full model pruning to assess whether per-layer output error improvements
translate into end-to-end perplexity gains.

We prune Qwen3-1.7B layer-by-layer with 2:4 sparsity
($\vb\!=\!(1,1),\;\vs\!=\!(1,4)$), using 1024 C4 calibration samples of length 1024. Each decoder
layer is processed independently: we capture inputs to all seven linear projections (Q, K, V, O,
Gate, Up, Down) via a single forward pass, prune each projection, then propagate through the pruned
layer to obtain inputs for the next. Perplexity is evaluated on WikiText-2 after every fourth layer
and at the final layer.

\begin{figure*}[ht]
  \centering
  \begin{subfigure}[t]{0.35\linewidth}
    \centering
    \includegraphics[width=\linewidth]{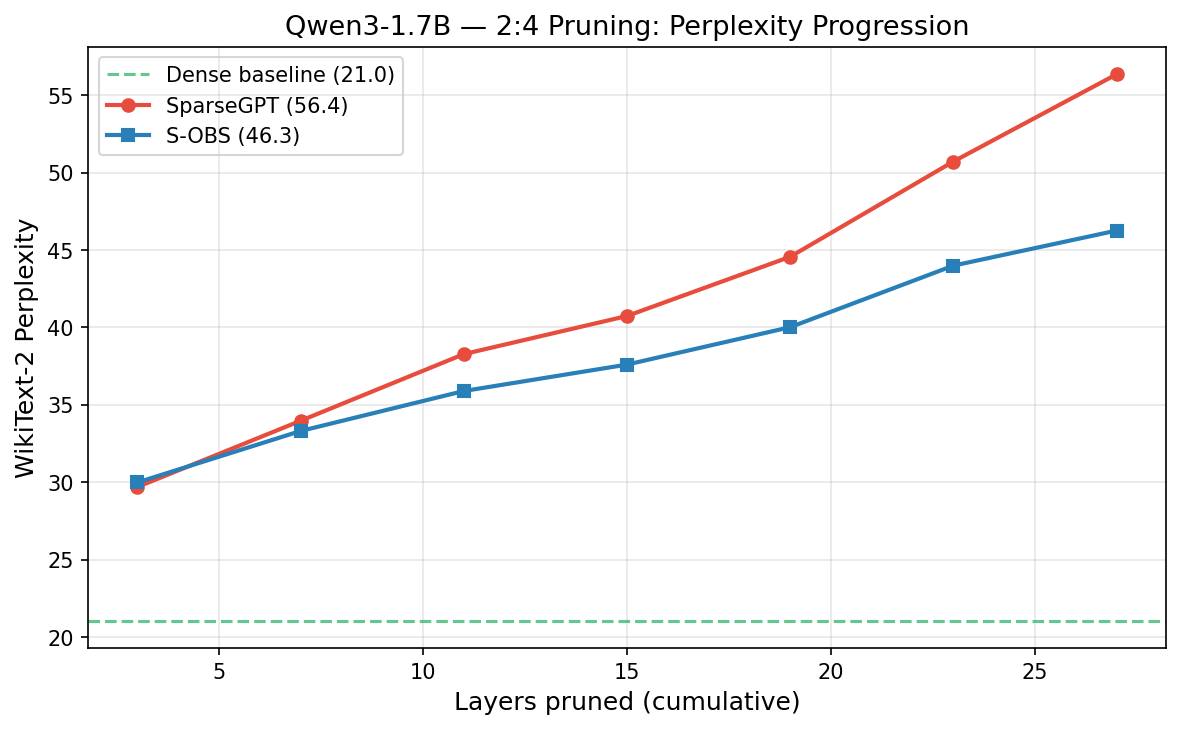}
    \caption{Perplexity progression}
    \label{fig:e2e_ppl}
  \end{subfigure}\hfill
  \begin{subfigure}[t]{0.63\linewidth}
    \centering
    \includegraphics[width=\linewidth]{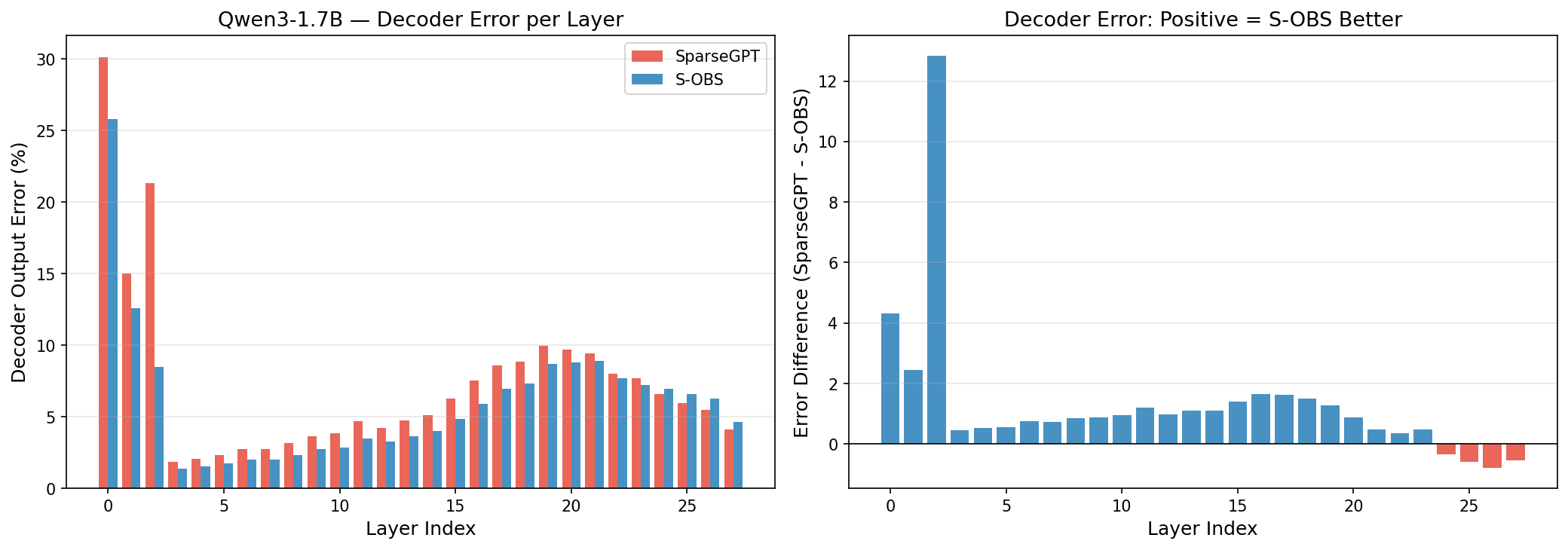}
    \caption{Decoder output error}
    \label{fig:e2e_decoder}
  \end{subfigure}
  \caption{End-to-end 2:4 pruning of Qwen3-1.7B.
    \Cref{fig:e2e_ppl}:~WikiText-2 perplexity after pruning every fourth decoder layer.
    S-OBS maintains a growing advantage throughout, reaching 46.3 vs.\ 56.4 for SparseGPT.
    \Cref{fig:e2e_decoder}:~Per-layer decoder output error. S-OBS achieves lower error
    in 25 of 28 layers; SparseGPT is slightly better only in the final three.}
  \label{fig:e2e_ppl_decoder}
\end{figure*}

\begin{table}[t]
  \centering
  \small
  \begin{tabular}{@{}lcccc@{}}
    \toprule
    Config       & S-OBS   & SparseGPT & Impr.     & Time \\
    \midrule
    2:4          & 11.87\% & 14.12\%   & $+$16.0\% & 53s  \\
    Coupled 2:4  & 15.75\% & 19.01\%   & $+$17.1\% & 217s \\
    4:8          & 15.92\% & 19.00\%   & $+$16.2\% & 279s \\
    16-col block & 26.91\% & 33.46\%   & $+$19.6\% & 108s \\
    \bottomrule
  \end{tabular}
  \caption{Structured OBS via \sthree{} vs.\ SparseGPT on the first decoder layer of
    Qwen3-4B across four sparsity configurations (50\% sparsity). S-OBS reports the best
    variant (True OBS with per-row Schur updates). ``Impr.'' is relative loss reduction
    $(1 - \text{S-OBS}/\text{SparseGPT}) \times 100$.}
  \label{tab:results}
\end{table}

\begin{figure*}[t]
  \centering
  \begin{subfigure}[t]{0.48\linewidth}
    \centering
    \includegraphics[width=\linewidth]{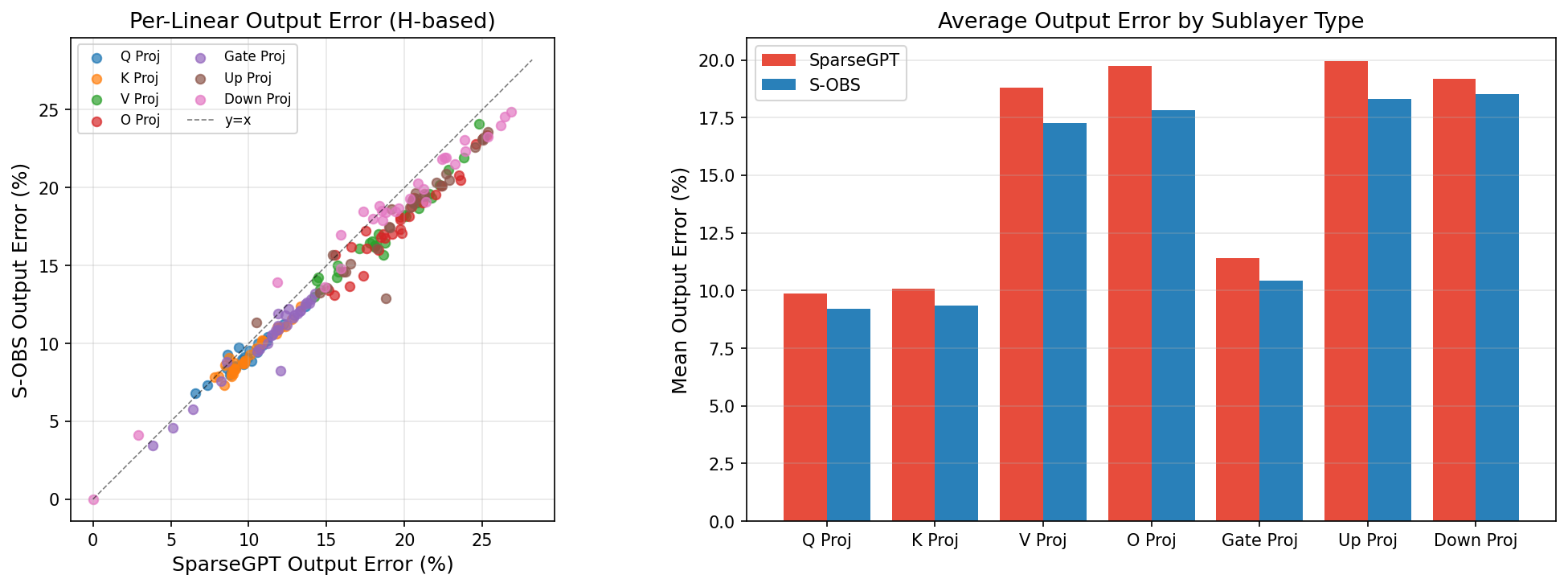}
    \caption{Per-linear output error}
    \label{fig:e2e_linear}
  \end{subfigure}\hfill
  \begin{subfigure}[t]{0.48\linewidth}
    \centering
    \includegraphics[width=\linewidth]{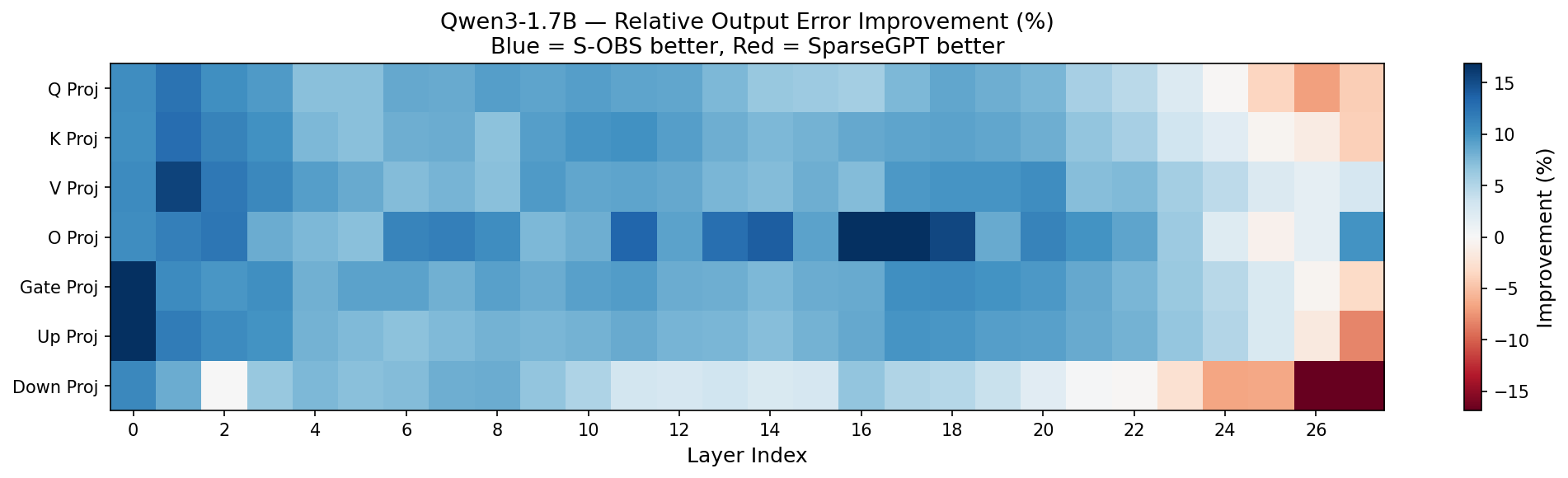}
    \caption{Improvement heatmap}
    \label{fig:e2e_heatmap}
  \end{subfigure}
  \caption{Per-linear analysis of Qwen3-1.7B 2:4 pruning.
    \Cref{fig:e2e_linear}:~Output error scatter (H-based metric). Points below the
    diagonal indicate S-OBS achieves lower error. S-OBS wins on 177 of 196 linear layers.
    \Cref{fig:e2e_heatmap}:~Relative improvement heatmap across layers and sublayer types.
    Blue indicates S-OBS is better. S-OBS dominates except for a few MLP projections in
    the final layers.}
  \label{fig:e2e_linear_heatmap}
\end{figure*}
\Cref{tab:e2e} and \Cref{fig:e2e_ppl} show that S-OBS achieves
17.9\% lower perplexity degradation than SparseGPT (46.26 vs.\ 56.36), with the gap
widening steadily as more layers are pruned.

S-OBS consistently achieves lower per-layer output error. \Cref{fig:e2e_linear} shows that S-OBS
achieves lower H-based output error on 177 of 196 individual linear projections. The heatmap
(\Cref{fig:e2e_heatmap}) confirms this advantage spans all sublayer types and nearly all layers.

Per-linear gains translate to model-level gains on Qwen3-1.7B. The per-layer decoder output error
(\Cref{fig:e2e_decoder}) shows S-OBS is better in 25 of 28 layers, and this advantage compounds
into a substantial perplexity improvement.

SparseGPT overtakes S-OBS in the final 3 layers because the Hessian becomes increasingly ill-conditioned in later layers as pruning errors accumulate in the propagated activations. The effect is most pronounced for
projections with large input dimension ($K \gg n_{\text{cal}}$, \eg the \texttt{down\_proj} with
$K\!=\!6144$ vs.\ $n_{\text{cal}}\!=\!1024$), where the rank-deficient Hessian inverse used by
S-OBS becomes less reliable. SparseGPT's column-sequential updates are more robust in this regime.

The per-linear output error advantage of S-OBS is consistent across model sizes; however, it does
not always translate to better perplexity. On Qwen3-0.6B, S-OBS wins on 178 of 196 linear layers
yet achieves higher final perplexity (154.2 vs.\ 126.2) because the late-layer decoder error
reversal is more severe in smaller models. Notably, decoder error varies substantially
across layers; middle layers appear more prunable than early or late ones, though
this observation is conditioned on the C4 calibration data and may not generalize across
domains. We present the full Qwen3-0.6B results in \Cref{sec:appendix_0.6b} and
per-projection error with further discussion in
\Cref{sec:appendix_linear_type} (\Cref{fig:error_progression,fig:error_progression_4b}).

\paragraph{S-OBS runtime.}
S-OBS maintains a per-row $K \times K$ inverse Hessian, making its cost
$O(M \cdot K^2)$ per linear projection. On Qwen3-4B ($d\!=\!2560$,
$d_{\text{ff}}\!=\!9728$), the \texttt{down\_proj} ($K\!=\!9728$) alone
accounts for 73\% of the total pruning time because $K$ is
$3.8\times$ larger than for attention projections ($K\!=\!2560$), and
the per-row Schur update scales quadratically. SparseGPT avoids this
bottleneck via column-sequential updates that are $O(M \cdot K)$ per
column, at the cost of lower pruning quality.

%% file: 7_conc.tex
We have presented \sthree{} (Structured Sparsity Specification), a unified algebraic framework for
neural network pruning. By decomposing sparsity into a composed View and two shape-based
specifications (Block and Scope) with coupling across tensors, \sthree{} provides a principled
approach to specifying diverse sparsity patterns under a common formalism.

Using \sthree{}, we implemented Structured OBS (S-OBS) with per-row Hessian inverse maintenance via
Schur complement updates. On a single layer, S-OBS surpasses SparseGPT by 16--20\% across four
distinct sparsity configurations, all pruned by the same algorithm with different \sthree{}
specifications. End-to-end pruning of Qwen3-1.7B with 2:4 sparsity yields 17.9\% lower perplexity
degradation than SparseGPT (46.3 vs.\ 56.4). Crucially, our experiments show that the quality gap
comes from per-row Hessian compensation, not mask selection: replacing SparseGPT's diagonal scoring
with exact subset enumeration yields no improvement, while S-OBD (no compensation) performs far
worse than both methods.

The framework has limitations. S-OBS is 2--3$\times$ slower than SparseGPT due to per-row Schur
updates, though this is a one-time cost amortized over the entire deployment lifetime of the pruned
model. The more fundamental limitation is that the per-row inverse Hessian becomes rank-deficient
when calibration samples are fewer than the input dimension, causing late-layer degradation that
prevents end-to-end gains on smaller models. Beyond post-training pruning, \sthree{} can also serve as a structural backbone for dynamic sparse training (\Cref{appendix:dynamic_sparse}).

%% file: appendix.tex
\definecolor{codebg}{HTML}{F6F8FA}
\definecolor{codeframe}{HTML}{D0D7DE}
\definecolor{kwcolor}{HTML}{0033B3}
\definecolor{cmtcolor}{HTML}{6E7781}
\definecolor{strcolor}{HTML}{A31515}
\definecolor{numcolor}{HTML}{116644}

\lstdefinestyle{pythonstyle}{
  language=Python,
  basicstyle=\ttfamily\small,
  keywordstyle=\color{kwcolor}\bfseries,
  commentstyle=\color{cmtcolor}\itshape,
  stringstyle=\color{strcolor},
  numberstyle=\tiny\color{cmtcolor},
  identifierstyle=\color{black},
  morekeywords={True, False, None, self, as},
  backgroundcolor=\color{codebg},
  frame=single,
  framerule=0.5pt,
  rulecolor=\color{codeframe},
  columns=fullflexible,
  keepspaces=true,
  breaklines=true,
  showstringspaces=false,
  tabsize=4,
  xleftmargin=6pt,
  xrightmargin=6pt,
  aboveskip=6pt,
  belowskip=6pt,
}

\clearpage
\begin{center}
  {\LARGE\bfseries Appendix}
\end{center}
\vspace{1em}

\section{Notation}
Throughout this document, we will use the following notations:

\begin{table}[H]
  \centering
  \small
  \begin{tabular}{@{}cl@{}}
    \toprule
    Symbol                         & Description               \\
    \midrule
    $\T$                           & A dense tensor            \\
    $\vs = (s_0, \ldots, s_{n-1})$ & Shape tuple (extents)     \\
    $\vd = (d_0, \ldots, d_{n-1})$ & Stride tuple              \\
    $\Layout = \vs:\vd$            & Layout specification      \\
    $|\Layout|$                    & Size of a layout          \\
    $\cosize(\Layout)$             & Cosize (memory footprint) \\
    $\mathcal{D}$                  & Domain specification      \\
    $\mathbf{o}_D, \mathbf{e}_D$   & Domain offset and extent  \\
    $\B$                           & Block specification       \\
    $\Scope$                           & Scope specification       \\
    $\otimes$                      & Coupling operator         \\
    \bottomrule
  \end{tabular}
  \caption{Notation summary.}
  \label{tab:notation}
\end{table}

\section{Domain Specification}
\label[appendix]{app:domain}

The Domain specifies which sub-tensor is subject to sparsification. Elements outside the
domain are untouched by pruning.

\begin{definition}[Domain]
 \label{def:domain}
 A domain $\mathcal{D}$ over tensor $\T$ with physical layout $\Layout_{\text{phys}}$ is defined by:
 \begin{itemize}
  \item \textbf{Offset} $\mathbf{o}_D = (o_0, \ldots, o_{n-1})$: Starting coordinates in the physical tensor
  \item \textbf{Extent} $\mathbf{e}_D = (e_0, \ldots, e_{n-1})$: Size of the domain along each dimension
 \end{itemize}
 satisfying $o_k + e_k \leq s_k^{(\Layout_{\text{phys}})}$ for all $k$.
\end{definition}

\begin{definition}[Domain Size]
 \label{def:domain_size}
 The number of elements in the domain:
 \begin{equation}
  |\mathcal{D}| = \prod_{k=0}^{n-1} e_k
 \end{equation}
\end{definition}

\begin{definition}[Domain Layout]
 \label{def:domain_layout}
 The domain induces a layout $\Layout_D$ with:
 \begin{equation}
  \Layout_D = \mathbf{e}_D : \vd^{(\Layout_{\text{phys}})}
 \end{equation}
 The domain maps local coordinates $\vi_D \in \dom(\Layout_D)$ to physical coordinates:
 \begin{equation}
  \psi_D(\vi_D) = \mathbf{o}_D + \vi_D
 \end{equation}
\end{definition}

\begin{definition}[Domain Element Set]
 \label{def:domain_elements}
 The set of physical linear indices covered by domain $\mathcal{D}$:
 \begin{equation}
  \Elements(\mathcal{D}) = \left\{ \phi_{\Layout_{\text{phys}}}(\mathbf{o}_D + \vi_D) : \vi_D \in \dom(\Layout_D) \right\}
 \end{equation}
\end{definition}

\begin{example}[Sub-matrix Domain]
 For a matrix $\vW \in \R^{128 \times 128}$ with row-major storage, to sparsify only a $16 \times 16$ block starting at $(32, 64)$:
 \begin{equation}
  \mathbf{o}_D = (32, 64), \quad \mathbf{e}_D = (16, 16)
 \end{equation}
 The domain covers $16 \times 16 = 256$ elements. The remaining $128^2 - 256 = 16128$ elements are outside the domain and will not be pruned.
\end{example}

\subsection{Generalized Domain via Layout}

For non-contiguous or strided sub-tensor selection, we generalize the domain using a
layout.

\begin{definition}[Generalized Domain]
 \label{def:generalized_domain}
 A generalized domain $\mathcal{D}_G$ is specified by:
 \begin{itemize}
  \item \textbf{Domain Layout} $\Layout_D$: Defines the logical shape of the domain
  \item \textbf{Embedding} $\phi_D: \dom(\Layout_D) \to [0, |\Layout_{\text{phys}}|)$: Maps domain coordinates to physical linear indices
 \end{itemize}
\end{definition}

\begin{example}[Strided Domain]
 To select every other row of a $128 \times 128$ matrix (rows $0, 2, 4, \ldots, 126$):
 \begin{equation}
  \Layout_D = (64, 128):(256, 1)
 \end{equation}
 This creates a domain of shape $64 \times 128$ where the row stride is $256 = 2 \times 128$, effectively selecting alternate rows.
\end{example}

\begin{example}[Diagonal Blocks Domain]
 To select $k$ diagonal $b \times b$ blocks from a $kb \times kb$ matrix:
 \begin{equation}
  \Layout_D = (k, b, b):((b+1) \cdot kb \cdot b, kb, 1)
 \end{equation}
 This selects the diagonal blocks while skipping off-diagonal regions.
\end{example}

\subsection{Domain Composition}

Multiple domains can be composed to create complex selection patterns.

\begin{definition}[Domain Union]
 \label{def:domain_union}
 For domains $\mathcal{D}_1, \mathcal{D}_2$ over the same tensor:
 \begin{equation}
  \Elements(\mathcal{D}_1 \cup \mathcal{D}_2) = \Elements(\mathcal{D}_1) \cup \Elements(\mathcal{D}_2)
 \end{equation}
\end{definition}

\begin{definition}[Domain Complement]
 \label{def:domain_complement}
 The complement domain $\bar{\mathcal{D}}$ contains all elements not in $\mathcal{D}$:
 \begin{equation}
  \Elements(\bar{\mathcal{D}}) = [0, |\T|) \setminus \Elements(\mathcal{D})
 \end{equation}
\end{definition}

\subsection{Constraint: Block View over Domain}

With domains, the Block View constraint is relaxed:

\begin{definition}[Block View over Domain]
 \label{def:block_view_domain}
 Given domain $\mathcal{D}$ with layout $\Layout_D$, the Block View $\View_B$ satisfies:
 \begin{equation}
  |\View_B| = |\mathcal{D}|
 \end{equation}
 The Block View reshapes the \emph{domain}, not the full tensor.
\end{definition}

\section{Block Specification Details}\label[appendix]{appendix:blockspec}

\subsection{Block Index Mapping}

\begin{theorem}[Block-to-Element Mapping]
  \label{thm:block_to_element}
  For block index $\vj \in \Z^n$ and intra-block offset $\vo \in \Z^n$ with $0 \leq o_k < b_k$:
  \begin{equation}
    \idx(\vj, \vo) = \sum_{k=0}^{n-1} (j_k \cdot b_k + o_k) \cdot d_k^{(\View)}
  \end{equation}
\end{theorem}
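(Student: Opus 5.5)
The plan is to unfold the definitions. Block $\vj$ with offset $\vo$ sits at view coordinate $\vi = (i_0,\ldots,i_{n-1})$ with $i_k = j_k b_k + o_k$. Applying the layout function of the View (Definition~\ref{def:layout_function}) to that coordinate then yields the claimed formula.

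First, I would fix the meaning of the block index. Definition~\ref{def:block_shape} guarantees $b_k \mid s_k^{(\View)}$. So each dimension $k$ of the View splits into $g_k^{(B)} = s_k^{(\View)}/b_k$ contiguous tiles of length $b_k$, and the block grid of Definition~\ref{def:block_grid} has $j_k \in [0, g_k^{(B)})$. The block with grid coordinate $\vj$ is the hyperrectangle of view coordinates
\[
\{\vi : j_k b_k \le i_k < (j_k+1) b_k \ \forall k\}.
\]
Its elements are parametrized bijectively by the offset $\vo$ via $i_k = j_k b_k + o_k$, where $0 \le o_k < b_k$. This is just division with remainder of $i_k$ by $b_k$, so every view coordinate arises from exactly one pair $(\vj,\vo)$.

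Second, I would check that $\vi \in \dom(\View)$. We have $0 \le j_k b_k + o_k \le (g_k^{(B)}-1)b_k + b_k - 1 = s_k^{(\View)} - 1$. Then I would evaluate $\phi_\View(\vi) = \sum_k i_k d_k^{(\View)}$, which gives the stated $\idx(\vj,\vo)$ directly.

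As a consistency check, I would expand the sum into
\[
\sum_k j_k (b_k d_k^{(\View)}) + \sum_k o_k d_k^{(\View)} = \phi_{\Layout_{\text{grid}}}(\vj) + \phi_{\Layout_B}(\vo).
\]
Here the grid strides are $d_{\text{grid},k} = b_k d_k^{(\View)}$ and the block layout is $\Layout_B = \vb:\vd^{(\View)}$. This matches the ``block start address plus intra-block offset'' description in the example following Definition~\ref{def:block_grid}.

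The only real obstacle is interpretive rather than computational. The statement treats $\idx$ as an index into the View's linear space, that is, the view layout function. If it is instead meant to be a physical memory index, I would need to argue that the View's strides are already expressed in physical memory units. That holds when $\View$ is given as a shape:stride layout over the physical buffer, as in all the paper's examples, since there $\phi_\View$ composed with the identity on memory returns the physical offset. I would state this convention explicitly and note that under it the result is exactly Definition~\ref{def:layout_function} evaluated at $\vi$.
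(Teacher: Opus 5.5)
Your proposal is correct and follows the paper's proof: take the view coordinate $v_k = j_k b_k + o_k$ and evaluate the View's layout function $\phi_{\View}$ at it. The extra checks you add are sound and go beyond the paper's one-line argument, which doesn't need them: domain membership via $b_k \mid s_k^{(\View)}$, bijectivity of $(\vj,\vo)\mapsto\vi$, and the split into $\phi_{\Layout_{\text{grid}}}(\vj)+\phi_{\Layout_B}(\vo)$.
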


\begin{proof}
  The view coordinate corresponding to block $\vj$ at offset $\vo$ is:
  $\mathbf{v} = (j_0 \cdot b_0 + o_0, \ldots, j_{n-1} \cdot b_{n-1} + o_{n-1})$.
  Applying the view layout function:
  $\phi_{\View}(\mathbf{v}) = \sum_{k=0}^{n-1} v_k \cdot d_k^{(\View)} = \sum_{k=0}^{n-1} (j_k \cdot b_k + o_k) \cdot d_k^{(\View)}$.
\end{proof}

\begin{definition}[Block Element Enumeration]
  \label{def:block_enumeration}
  For block $\vj$, the set of linear indices is:
  \begin{equation}
    \Elements(\vj) = \left\{ \idx(\vj, \vo) : \vo \in \prod_{k=0}^{n-1} [0, b_k) \right\}
  \end{equation}
\end{definition}

\begin{definition}[Element-to-Block Mapping]
  \label{def:element_to_block}
  The function $\beta: [0, |\T|) \to [0, |\Scope|_B)$ maps element index to block index:
  \begin{equation}
    \beta(e) = \sum_{k=0}^{n-1} \left\lfloor \frac{v_k(e)}{b_k} \right\rfloor \cdot \prod_{\ell < k} g_\ell^{(B)}
  \end{equation}
  where $v_k(e)$ is the $k$-th view coordinate of element $e$.
\end{definition}

\section{Canonical Sparsity Patterns}
\label[appendix]{app:patterns}

We demonstrate \sthree{}'s expressiveness by encoding canonical pruning patterns.
Each pattern is fully determined by the View $\View$, Block Shape $\vb$, and Scope
Shape $\vs$; \Cref{tab:patterns} summarizes five representative patterns.
The four experimental patterns from \Cref{sec:exp_patterns} are visualized in
\Cref{fig:patterns_app}.

\begin{figure}[!t]
  \centering
  \includegraphics[width=0.9\linewidth]{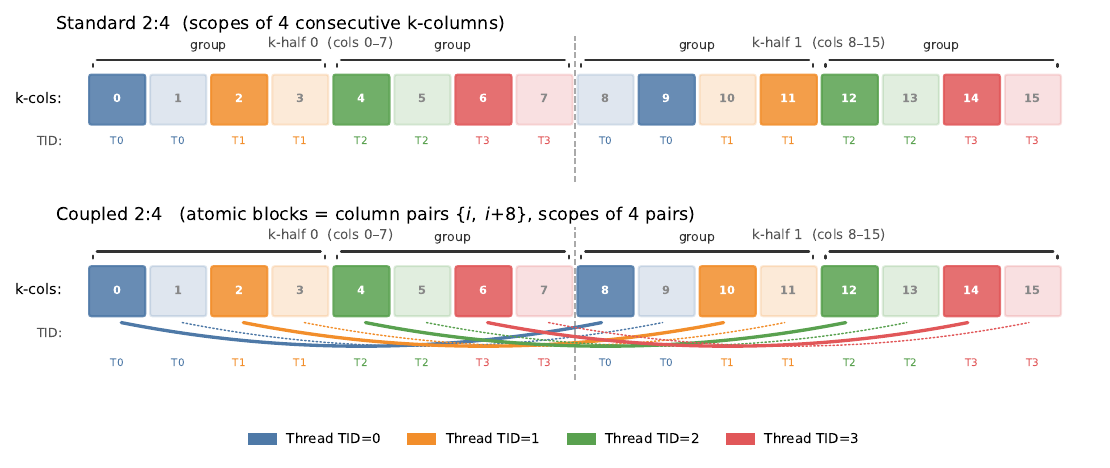}
  \caption{Standard 2:4 (top) vs.\ Coupled 2:4 (bottom) over a single
    A-fragment row of \texttt{mma.m16n8k16} (16 k-columns).  Cell color
    encodes the owning TID; faded cells are pruned.  Arcs link the
    column pair $\{c,\; c\!+\!8\}$; solid arcs are retained pairs, dashed arcs
    are pruned pairs.  Standard 2:4 requires independent masks for each k-half.
    Coupled 2:4 uses one mask for both halves, halving the metadata and
    improving the compression ratio from $9/16$ to $17/32$
    (\Cref{app:coupled_24_compression}).}
  \label{fig:coupled_vs_standard}
\end{figure}

\subsection{Unstructured Sparsity}

Unstructured sparsity treats every scalar as an independent block. The view is the
identity ($\View = \Layout_{\text{phys}}$), block shape is all-ones
($\vb = (1, 1, \ldots, 1)$), and a single block spans the entire tensor
($\vs = (|\T|,)$). Retaining the top-$k$ blocks by saliency recovers standard
magnitude or OBS pruning at arbitrary sparsity ratios.

\subsection{N:M Sparsity}

For 2:4 sparsity on a matrix $\vW \in \R^{M \times K}$, the view is the physical
layout and scopes block columns into consecutive fours:
\begin{equation}
  \View = (M, K):(K, 1), \quad \vb = (1, 1), \quad \vs = (1, 4)
\end{equation}
Each scope contains four scalar blocks along the column dimension, and retaining
$k\!=\!2$ per scope enforces exactly 2-of-4 sparsity. Generalizing to $N\!:\!M$ only
requires changing the innermost block dimension to $M$ and setting $k\!=\!N$.

\subsection{Block Sparsity}

For $b \times b$ block sparsity the view is again the identity, but the block shape
absorbs the spatial extent: $\vb = (b, b)$. The scope shape controls how blocks
compete: setting $\vs$ equal to the number of block rows yields row-wise sparsity,
while a global block enforces a tensor-wide budget.

\subsection{Channel Pruning}

For a convolutional layer $\vW \in \R^{C_{\text{out}} \times C_{\text{in}} \times H
\times W}$, the view flattens spatial and input dimensions so that each output channel
becomes a single block:
\begin{equation}
  \View = (C_{\text{out}}, C_{\text{in}} \!\cdot\! H \!\cdot\! W):(C_{\text{in}} \!\cdot\! H \!\cdot\! W, 1), \quad
  \vb = (1, C_{\text{in}} \!\cdot\! H \!\cdot\! W)
\end{equation}
A global block ($\vs = (C_{\text{out}},)$) then selects channels by saliency.

\subsection{Attention Head Pruning}

For multi-head attention with $h$ heads and dimension $d$, Q, K, and V projections use
block shape $(1, d/h, d)$ along the head dimension, while the output projection
uses $(d, 1, d/h)$ to match. Coupling all four matrices along the head dimension
ensures that pruning a head simultaneously removes the corresponding rows and columns
across Q, K, V, and O.

\subsection{Partial Tensor Sparsification}

\sthree{} can sparsify only a sub-region of a tensor via the Domain specification
(\Cref{app:domain}). As an example, consider a sparse MLP with a dense residual
pathway: for $\vW \in \R^{d \times d}$, the first $d/4$ output rows are kept dense
while the remaining $3d/4$ rows receive 2:4 sparsity. The domain offsets
$\mathbf{o}_D = (d/4, 0)$, $\mathbf{e}_D = (3d/4, d)$ select the sparse region,
within which the standard 2:4 specification applies:
\begin{equation}
  \View = (3d/4, d/4, 4):(d, 4, 1), \quad \vb = (1, 1, 1), \quad \vs = (1, 1, 4)
\end{equation}

\begin{table}[ht]
  \centering
  \small
  \setlength{\tabcolsep}{3pt}
  \begin{tabular}{@{}lllll@{}}
    \toprule
    Pattern            & View $\View$                                                        & Block $\vb$                                     & Scope $\vs$       & Keep \\
    \midrule
    Unstructured       & $(M,\;K):(K,1)$                                                     & $(1,1)$                                         & $(M,\;K)$             & $k$  \\
    N:M (2:4)          & $(M,\;K):(K,1)$                                                     & $(1,1)$                                         & $(1,\;4)$             & $2$  \\
    Block $b\!\times\!b$& $(M,\;K):(K,1)$                                                    & $(b,\;b)$                                       & $(1,\;K/b)$           & $k$  \\
    Channel            & $(C_{\text{out}},\;C_{\text{in}}HW)$                                & $(1,\;C_{\text{in}}HW)$                         & $(C_{\text{out}},)$   & $k$  \\
    Head               & $(h,\;d/h,\;d)$                                                     & $(1,\;d/h,\;d)$                                 & $(h,\;1,\;1)$         & $k$  \\
    \bottomrule
  \end{tabular}
  \caption{Canonical sparsity patterns expressed in \sthree{}. Strides are implied by
    standard row-major order. The \emph{Keep} column is the number of blocks retained per scope.}
  \label{tab:patterns}
\end{table}

\subsection{Experimental Patterns: Complete Specifications}
\label[appendix]{app:exp_patterns_full}

For a weight matrix $\vW \in \R^{M \times K}$, the four patterns used in the
experiments are fully determined by the following View, Block, and Scope shapes.

\textbf{2:4 sparsity.}
\begin{equation}
  \View = \Layout_{\text{phys}}, \quad \vb = (1,1), \quad \vs = (1,4), \quad k = 2
\end{equation}
Scalar blocks are scoped into blocks of four consecutive columns; retaining
$k\!=\!2$ enforces exactly 2-of-4 sparsity per row.

\textbf{4:8 sparsity.}
\begin{equation}
  \View = \Layout_{\text{phys}}, \quad \vb = (1,2), \quad \vs = (1,4), \quad k = 2
\end{equation}
Each 2-column block is the atomic unit; blocks of four blocks span 8 consecutive
columns, and $k\!=\!2$ retains half.

\textbf{Coupled 2:4.}
A strided view pairs columns that are 8 positions apart within each 16-column
segment. Element $[m, g, i, j]$ maps to $W_{m,\; 16g + i + 8j}$:
\begin{equation}
  \View = (M, K/16, 8, 2):(K, 16, 1, 8), \quad \vb = (1,1,1,2), \quad
  \vs = (1,1,4,1), \quad k = 2
\end{equation}
Each block bundles a coupled column pair; blocks of four such pairs enforce
2:4 sparsity over paired columns jointly.  The compression ratio improves from
$9/16$ to $17/32$ (\Cref{app:coupled_24_compression}).

\textbf{16-column block sparsity.}
A strided view couples rows that are 8 apart within 16-row chunks.
Element $[p, r, c]$ maps to $W_{p + 8r,\; c}$:
\begin{equation}
  \View = (8, 2, K):(K, 8K, 1), \quad \vb = (1,1,16), \quad \vs = (1,2,1),
  \quad k = 1
\end{equation}
Each block spans 16 contiguous columns; paired rows (0 and 8, 1 and 9, \ldots)
share the same column mask, yielding 50\% block sparsity.  This pattern preserves
contiguous memory access along the GEMM reduction dimension
(\Cref{app:block_sparse_gemm}).

\section{Hardware Mapping}
\label[appendix]{app:hardware}

\subsection{Tensor Core Warp Fragment Layout}
\label[appendix]{app:wmma}

NVIDIA tensor cores execute matrix multiply-accumulate on fixed tile shapes.
The \texttt{mma.sync.aligned.m16n8k16} instruction (Ampere and later) is the
fundamental fp16 tile: it multiplies a $16\!\times\!16$ A fragment by a
$16\!\times\!8$ B fragment and accumulates into a $16\!\times\!8$ C fragment,
with all 32 threads of a warp participating simultaneously.

\Cref{fig:wmma} shows the per-thread element assignment for each matrix.
Let $t$ denote the lane index ($0 \le t < 32$), $g = \lfloor t/4 \rfloor$
(thread block, $0\!\le\!g\!<\!8$), and $\tau = t \bmod 4$ (within-block offset).

\begin{itemize}
  \item \textbf{A} ($16\!\times\!16$, row-major): thread $t$ owns m-rows
        $\{g,\; g\!+\!8\}$ and k-columns $\{\tau\!\cdot\!2,\;\tau\!\cdot\!2\!+\!1,\;
        \tau\!\cdot\!2\!+\!8,\;\tau\!\cdot\!2\!+\!9\}$, comprising two k-column pairs,
        one per k-half.
  \item \textbf{B} ($16\!\times\!8$, col-major): thread $t$ owns k-rows
        $\{\tau\!\cdot\!2,\;\tau\!\cdot\!2\!+\!1,\;\tau\!\cdot\!2\!+\!8,\;
        \tau\!\cdot\!2\!+\!9\}$ and n-column $g$.
  \item \textbf{C} ($16\!\times\!8$): thread $t$ owns m-rows $\{g,\; g\!+\!8\}$
        and n-columns $\{\tau\!\cdot\!2,\; \tau\!\cdot\!2\!+\!1\}$.
\end{itemize}

\begin{figure}[t]
  \centering
  \includegraphics[width=0.8\linewidth]{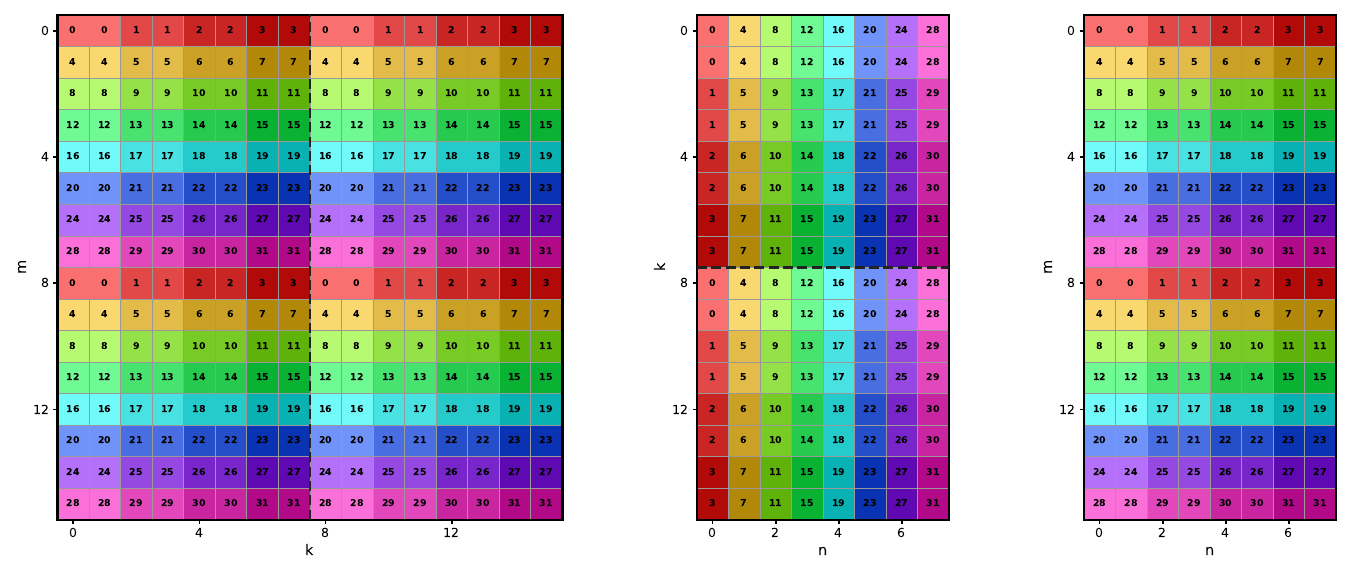}
  \caption{Thread-fragment assignment for \texttt{mma.sync.aligned.m16n8k16}.
    Cell color encodes the owning thread (hue = thread block $g = \lfloor t/4 \rfloor$,
    brightness = within-block offset $t \bmod 4$). Dashed lines mark the k-half
    boundary ($k\!=\!8$). Each thread owns 8 fp16 elements of A, 4 of B, and 4
    fp32 elements of C. The 2-column k-pair structure of A and B is the hardware
    reason why 2:4 and 4:8 sparsity patterns are natural: surviving elements fill
    exactly the k-col pairs owned by a single thread.}
  \label{fig:wmma}
\end{figure}

The 2-column granularity in A and B directly motivates our sparsity choices.
For 2:4 sparsity, each scope of 4 consecutive k-columns corresponds to exactly
two consecutive thread-owned column pairs; retaining 2 of 4 means keeping one
complete pair per thread.  For 4:8, scopes span two such pairs per k-half, and
retaining 4 of 8 keeps one pair per k-half per thread.  The 16-column block
sparsity aligns with the full k-extent of a single mma tile, so each surviving
block maps to a contiguous set of A columns owned by all 4 threads in a thread block.

\subsection{Coupled 2:4 vs.\ Standard 2:4}
\label[appendix]{app:coupled_vs_standard}

Standard 2:4 sparsity blocks 4 \emph{consecutive} k-columns and retains 2.
Each block of 4 elements requires its own 4-bit mask to record which 2 survive.

Coupled 2:4 instead treats the column \emph{pair} $\{c,\; c\!+\!8\}$ as the
atomic pruning unit and forms scopes of 4 such pairs.  Two pairs are retained
per scope, enforcing 2-of-4 sparsity over pairs rather than over individual
columns.  Because both columns in a pair share the same mask bit, the metadata
for the second k-half is eliminated entirely: one 4-bit mask covers 8 values
instead of 4.  This halves the metadata overhead from $4/32 = 12.5$\% to
$4/64 = 6.25$\% of the compressed payload.

The reduction is particularly significant for sub-byte quantized weights.
At 4-bit precision, standard 2:4 metadata adds 0.5 bits per weight
($\sim$10\% memory overhead), whereas coupled 2:4 reduces this to 0.25 bits
($\sim$5\%).  \Cref{fig:coupled_vs_standard} illustrates the difference.


\subsection{NVIDIA Sparse Tensor Cores}

NVIDIA Ampere and later architectures accelerate 2:4 structured sparsity in hardware.
The corresponding \sthree{} specification uses scalar blocks blocked into fours along
the innermost dimension:
\begin{equation}
 \View_B = (M, K/4, 4):(K, 4, 1), \quad \vb = (1, 1, 1)
\end{equation}
\begin{equation}
 \View_G = (M, K/4, 4):(1, M, M \cdot K/4), \quad \vs = (1, 1, 4)
\end{equation}

\subsection{Block-Sparse GEMM}
\label[appendix]{app:block_sparse_gemm}

Efficient block-sparse matrix multiplication requires blocks that align with GEMM
tiling. Common tile sizes are $64 \times 64$, $128 \times 128$, and $256 \times 256$,
so setting $\vb = (64, 64)$ or $(128, 128)$ ensures that each non-zero block maps
directly to a hardware tile without padding.

Column-block sparsity (e.g.\ 16-column blocks) is particularly hardware-friendly
because it preserves contiguous memory access along the reduction dimension of the
matrix multiply $\mathbf{Y} = \mathbf{X} \vW^T$. Each surviving column block corresponds to a
contiguous slice of the input activation $\mathbf{X}$, so the sparse GEMM reduces to a
sequence of dense sub-GEMMs with no gather or scatter overhead. The block width (16)
aligns with GPU vector load widths (128-bit or 256-bit), ensuring coalesced memory
transactions. Row-coupling across the view further guarantees that paired rows share
the same column mask, allowing the sparse index metadata to be stored once per pair
rather than per row.

\subsection{Coupled 2:4 Compression Ratio}
\label[appendix]{app:coupled_24_compression}

Standard 2:4 sparsity stores 2 half-precision values (32 bits) plus a 4-bit mask per
block of 4 elements, giving a compressed size of 36 bits per 64 uncompressed bits, or a
$9/16$ compression ratio.

Coupled 2:4 applies the same 2:4 pattern over column pairs. Each scope now contains 4
pairs (8 half-precision values, 128 bits uncompressed). Retaining 2 pairs stores 4
values (64 bits) with the same 4-bit mask, for a total of 68 bits. The compression
ratio is $68/128 = 17/32 \approx 0.531$, improving over the standard $9/16 \approx
0.563$. The gain arises because the 4-bit mask metadata is amortized over 8 values
instead of 4, and the paired columns share a single index structure.

\subsection{Memory Access Patterns}

Coalesced memory access on GPUs requires that threads in a warp read contiguous
addresses. In \sthree{}, this translates to designing the view so that the innermost
block dimension has stride~1, ensuring that all elements within a block are loaded in a
single coalesced transaction.

\section{Qwen3-0.6B End-to-End Results}
\label[appendix]{sec:appendix_0.6b}

We repeat the end-to-end 2:4 pruning experiment from \Cref{sec:e2e} on the smaller
Qwen3-0.6B (28 layers, $d\!=\!1024$, $d_{\text{ff}}\!=\!3072$).

\begin{table}[t]
  \centering
  \small
  \begin{tabular}{@{}lcccc@{}}
    \toprule
    Method & Dense PPL & PPL ($\downarrow$) & PPL Increase & Time \\
    \midrule
    S-OBD     & 26.13 & 369.78 & $+$1315.1\% & 79s \\
    SparseGPT & 26.13 & \textbf{126.23} & $+$383.0\% & 165s \\
    S-OBS     & 26.13 & 154.16 & $+$489.9\% & 223s \\
    \bottomrule
  \end{tabular}
  \caption{End-to-end 2:4 pruning of Qwen3-0.6B (28 layers). WikiText-2 word perplexity,
    1024 C4 calibration samples. S-OBD (no compensation) performs substantially worse
    than both SparseGPT and S-OBS, confirming that compensation is critical.}
  \label{tab:e2e_0.6b}
\end{table}

\begin{figure}[!t]
  \centering
  \begin{subfigure}[t]{0.3\linewidth}
    \centering
    \includegraphics[width=\linewidth]{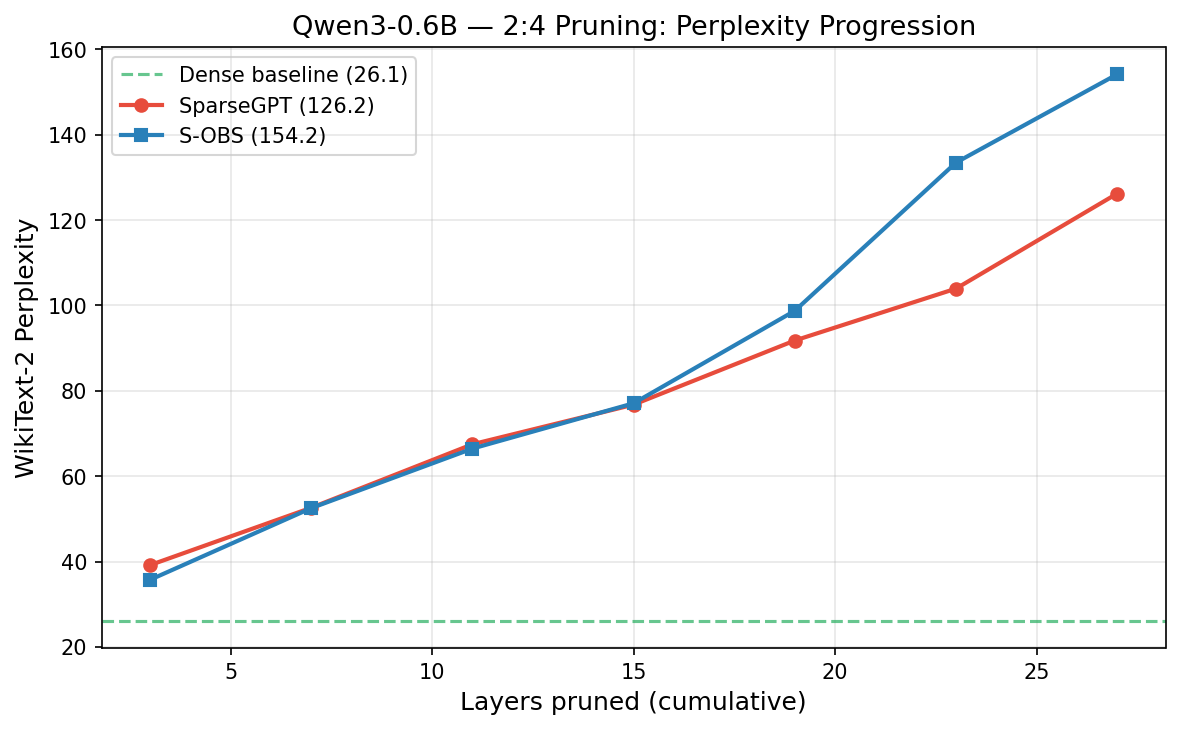}
    \caption{Perplexity progression}
    \label{fig:e2e_0.6b_ppl}
  \end{subfigure}\hfill
  \begin{subfigure}[t]{0.68\linewidth}
    \centering
    \includegraphics[width=\linewidth]{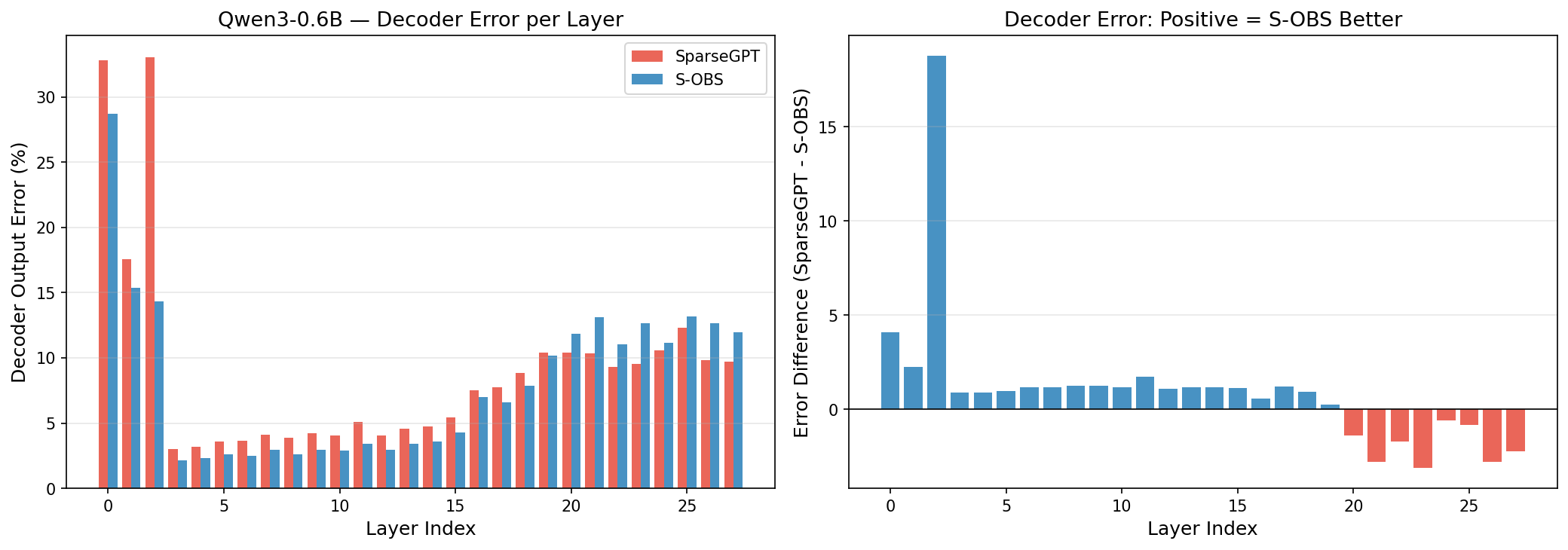}
    \caption{Decoder output error}
    \label{fig:e2e_0.6b_decoder}
  \end{subfigure}
  \caption{End-to-end 2:4 pruning of Qwen3-0.6B.
    \Cref{fig:e2e_0.6b_ppl}:~WikiText-2 perplexity after pruning every fourth decoder
    layer. SparseGPT maintains lower perplexity throughout, reaching 126.2 vs.\ 154.2
    for S-OBS.
    \Cref{fig:e2e_0.6b_decoder}:~Per-layer decoder output error. S-OBS achieves lower
    error in early-to-mid layers, but SparseGPT becomes substantially better in later
    layers.}
  \label{fig:e2e_0.6b_ppl_decoder}
\end{figure}

\begin{figure}[t]
  \centering
  \begin{subfigure}[t]{0.48\linewidth}
    \centering
    \includegraphics[width=\linewidth]{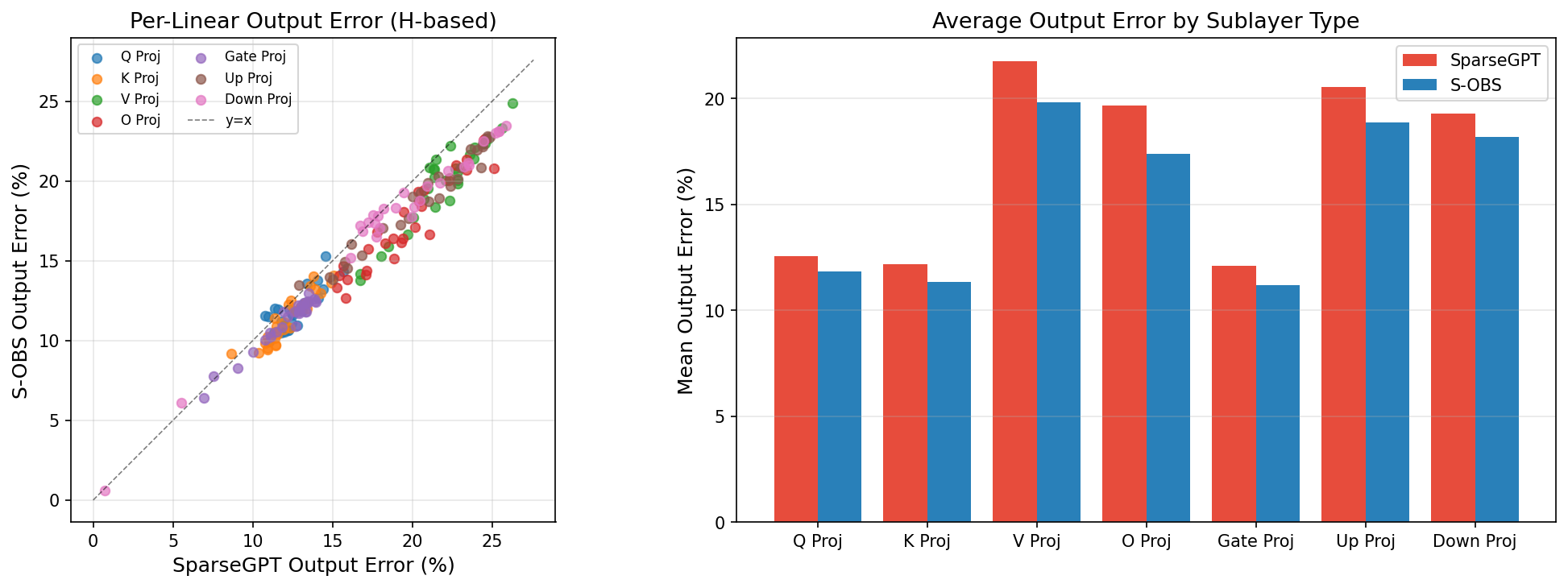}
    \caption{Per-linear output error}
    \label{fig:e2e_0.6b_linear}
  \end{subfigure}\hfill
  \begin{subfigure}[t]{0.48\linewidth}
    \centering
    \includegraphics[width=\linewidth]{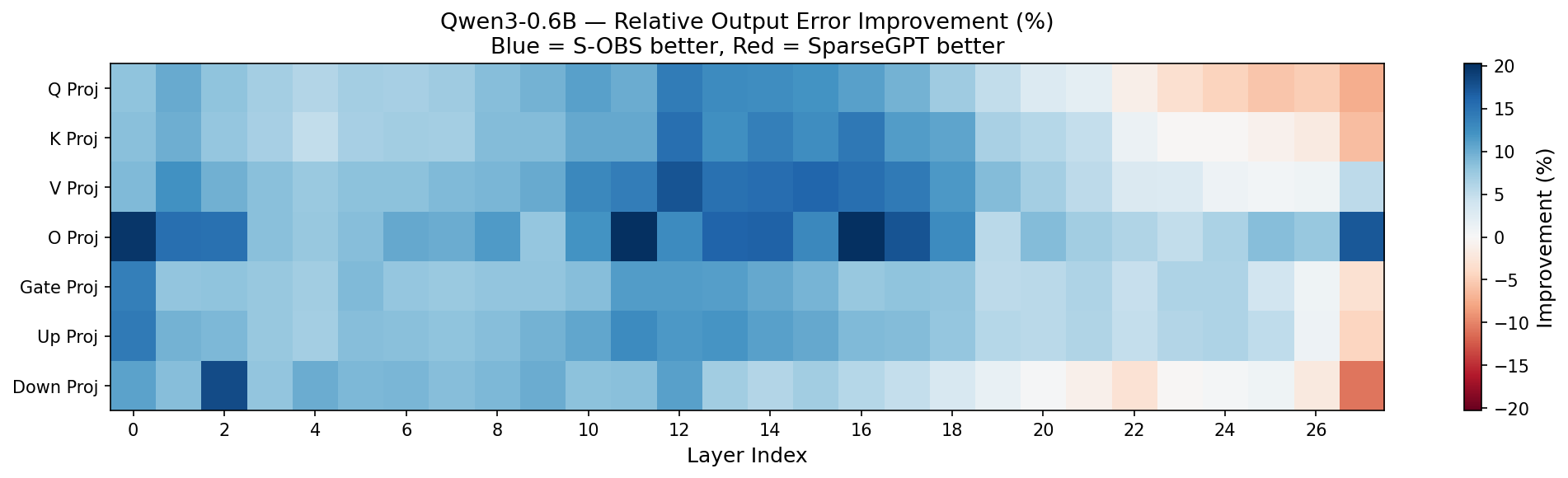}
    \caption{Improvement heatmap}
    \label{fig:e2e_0.6b_heatmap}
  \end{subfigure}
  \caption{Per-linear analysis of Qwen3-0.6B 2:4 pruning.
    \Cref{fig:e2e_0.6b_linear}:~Output error scatter. S-OBS wins on 178 of
    196 linear layers, comparable to the 177/196 advantage on Qwen3-1.7B.
    \Cref{fig:e2e_0.6b_heatmap}:~Improvement heatmap. S-OBS dominates early-to-mid
    layers but loses in the final layers, particularly for projections with large input
    dimension (\texttt{down\_proj}, $K\!=\!3072$).}
  \label{fig:e2e_0.6b_linear_heatmap}
\end{figure}

The per-linear error advantage is consistent across model sizes. S-OBS
achieves lower H-based output error on 178 of 196 linear layers
(\Cref{tab:e2e_0.6b,fig:e2e_0.6b_linear}), nearly identical to the
177/196 ratio on Qwen3-1.7B. The mean per-linear error is 15.51\% for
S-OBS vs.\ 16.88\% for SparseGPT.

However, per-linear gains do not translate to perplexity gains on small models.
Despite winning the per-linear metric on 91\% of layers, S-OBS achieves substantially
higher perplexity (154.2 vs.\ 126.2). This contrasts with Qwen3-1.7B, where S-OBS
achieves both lower per-linear errors \emph{and} lower perplexity (46.3 vs.\ 56.4).

The late-layer decoder error reversal is more severe in smaller models. The
decoder output error (\Cref{fig:e2e_0.6b_decoder}) shows that SparseGPT
overtakes S-OBS much earlier and by a larger margin than on Qwen3-1.7B. This is
consistent with the Hessian rank-deficiency explanation from \Cref{sec:e2e}: smaller
models have lower hidden dimension ($d\!=\!1024$) relative to the calibration set size
($n\!=\!1024$), so $K/n$ ratios are already at or above 1.0 for attention projections
($K\!=\!2048$ for \texttt{q\_proj}) and 3.0 for MLP projections
($K\!=\!3072$ for \texttt{down\_proj}). The compounding of late-layer errors overwhelms
the gains from early layers, leading to worse end-to-end perplexity despite better
per-linear metrics.

\section{Per-Linear-Type Error Progression}
\label[appendix]{sec:appendix_linear_type}

In both models, S-OBS (solid) achieves lower per-linear error than SparseGPT (dashed)
across nearly all projection types and layers (\Cref{fig:error_progression}).
The decoder error plots confirm that this advantage compounds in early-to-mid layers
but reverses in later layers, especially for the smaller model.

The decoder error varies substantially across layers: early layers (0--2) exhibit
high error that drops sharply, middle layers maintain low error, and late layers
show a gradual increase. This suggests that some layers are inherently more
``prunable'' than others, likely because middle layers develop more redundant
representations during pre-training, while early layers (close to the embedding)
and late layers (close to the prediction head) carry less redundancy.

However, both the pruning decisions and the error measurements are conditioned on
the C4 calibration data. Whether the same layers remain easy to prune under a
different data domain (e.g.\ code, mathematics, or multilingual text) is an open
question. The Hessian $\vH = \frac{1}{N}\mathbf{X}^T\mathbf{X}$ captures
second-order statistics of the calibration distribution; a domain shift could
change which directions are important, potentially altering both the optimal
pruning masks and the per-layer error profile. Investigating the sensitivity
of layer-wise prunability to calibration domain is left for future work.

\begin{figure}[H]
  \centering
  \begin{subfigure}[t]{0.48\linewidth}
    \centering
    \includegraphics[width=\linewidth]{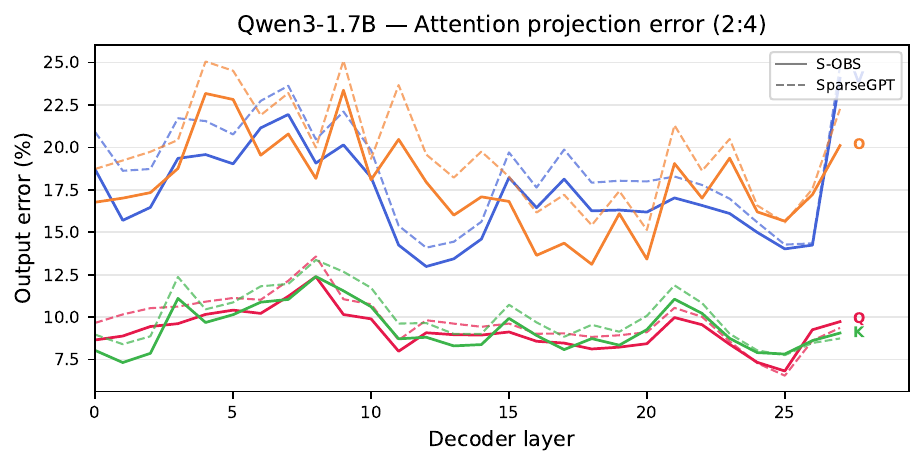}
    \caption{1.7B attention (Q/K/V/O)}
    \label{fig:attn_err_1.7b}
  \end{subfigure}\hfill
  \begin{subfigure}[t]{0.48\linewidth}
    \centering
    \includegraphics[width=\linewidth]{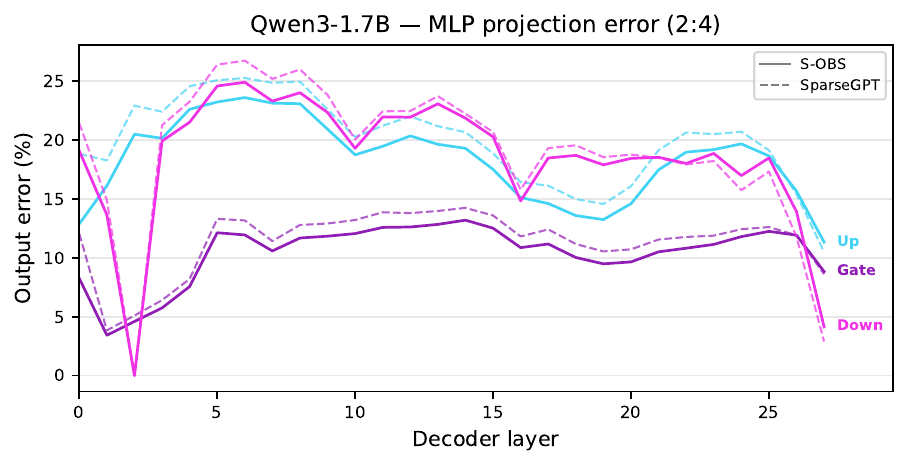}
    \caption{1.7B MLP (Gate/Up/Down)}
    \label{fig:mlp_err_1.7b}
  \end{subfigure}\\[6pt]
  \begin{subfigure}[t]{0.48\linewidth}
    \centering
    \includegraphics[width=\linewidth]{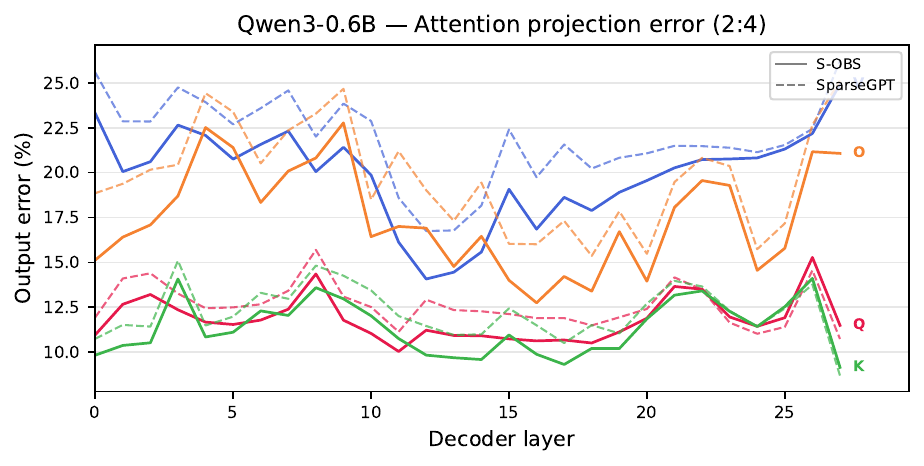}
    \caption{0.6B attention (Q/K/V/O)}
    \label{fig:attn_err_0.6b}
  \end{subfigure}\hfill
  \begin{subfigure}[t]{0.48\linewidth}
    \centering
    \includegraphics[width=\linewidth]{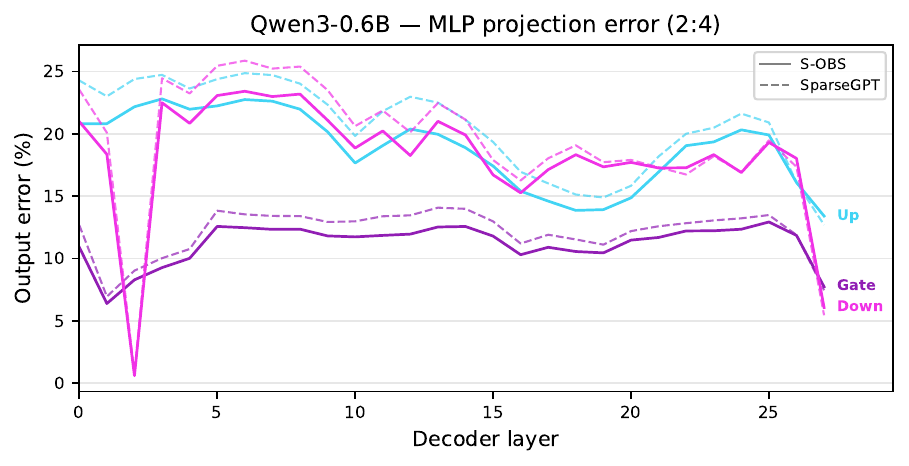}
    \caption{0.6B MLP (Gate/Up/Down)}
    \label{fig:mlp_err_0.6b}
  \end{subfigure}\\[6pt]
  \begin{subfigure}[t]{0.48\linewidth}
    \centering
    \includegraphics[width=\linewidth]{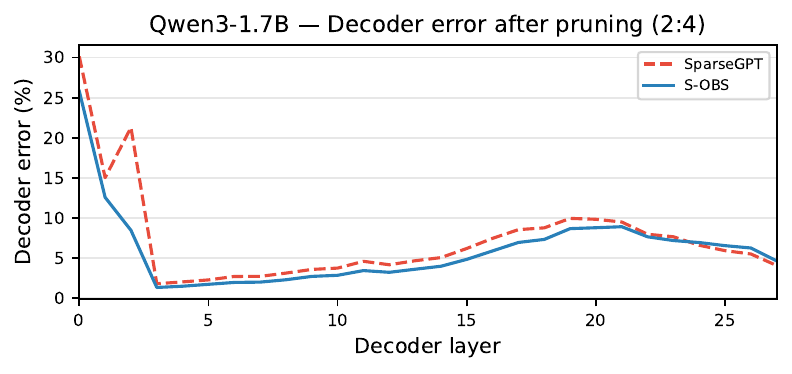}
    \caption{1.7B decoder error}
    \label{fig:decoder_err_1.7b}
  \end{subfigure}\hfill
  \begin{subfigure}[t]{0.48\linewidth}
    \centering
    \includegraphics[width=\linewidth]{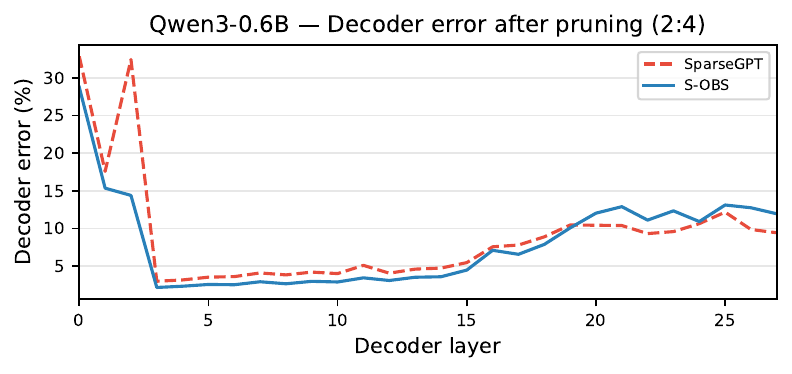}
    \caption{0.6B decoder error}
    \label{fig:decoder_err_0.6b}
  \end{subfigure}
  \caption{Per-projection output error and decoder error across decoder layers
    (2:4 pruning, 1024 C4 calibration samples). Solid = S-OBS, dashed = SparseGPT.
    Top row: attention projections (Q/K/V/O). Middle row: MLP projections (Gate/Up/Down).
    Bottom row: full decoder output error.
    Left column: Qwen3-1.7B. Right column: Qwen3-0.6B.
    \textbf{Per-linear metric} (top/middle):
    $\sqrt{\operatorname{tr}(\Delta\vW\,\vH\,\Delta\vW^T) \,/\, \operatorname{tr}(\vW\,\vH\,\vW^T)}$
    where $\Delta\vW = \hat{\vW} - \vW$, $\vH = \frac{1}{N}\mathbf{X}^T\mathbf{X}$.
    \textbf{Decoder metric} (bottom):
    $\normf{\hat{\mathbf{Y}} - \mathbf{Y}} \,/\, \normf{\mathbf{Y}}$
    where $\mathbf{Y}$ and $\hat{\mathbf{Y}}$ are the decoder layer outputs before
    and after pruning.
    S-OBS achieves lower per-linear error across nearly all layers and projection types,
    but the decoder-level advantage reverses in later layers, more severely for the
    smaller model.}
  \label{fig:error_progression}
\end{figure}

\begin{figure}[H]
  \centering
  \begin{subfigure}[t]{0.48\linewidth}
    \centering
    \includegraphics[width=\linewidth]{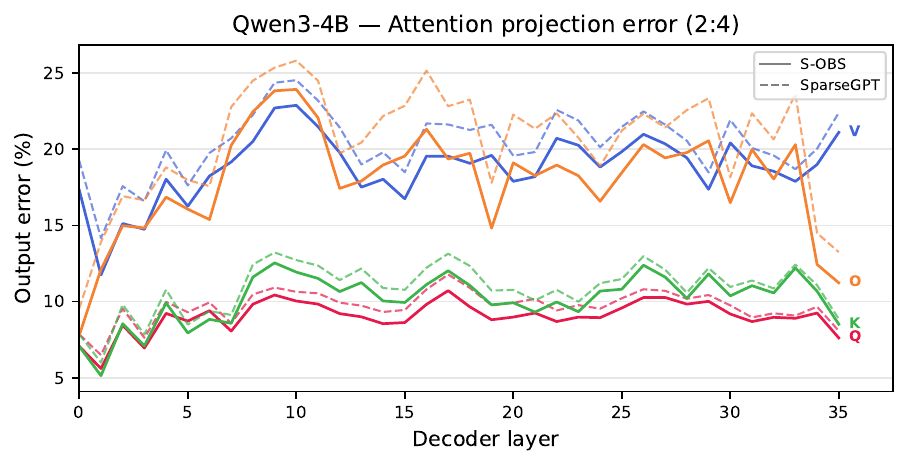}
    \caption{4B attention (Q/K/V/O)}
    \label{fig:attn_err_4b}
  \end{subfigure}\hfill
  \begin{subfigure}[t]{0.48\linewidth}
    \centering
    \includegraphics[width=\linewidth]{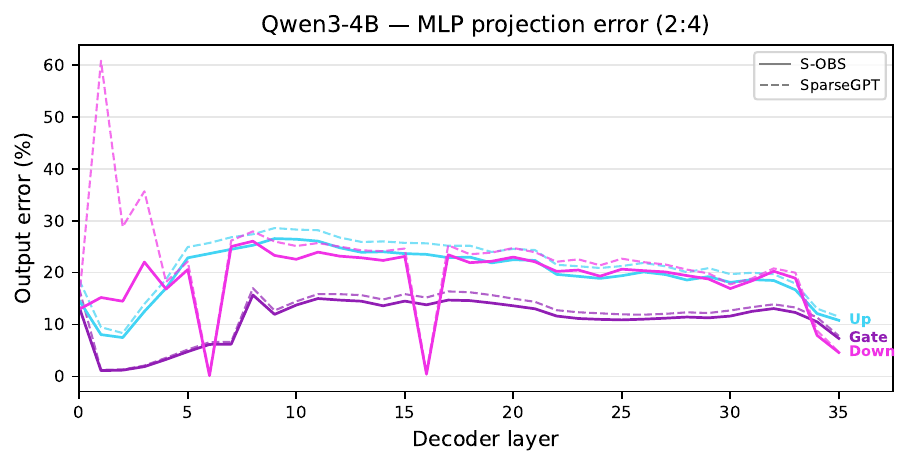}
    \caption{4B MLP (Gate/Up/Down)}
    \label{fig:mlp_err_4b}
  \end{subfigure}\\[6pt]
  \begin{subfigure}[t]{0.48\linewidth}
    \centering
    \includegraphics[width=\linewidth]{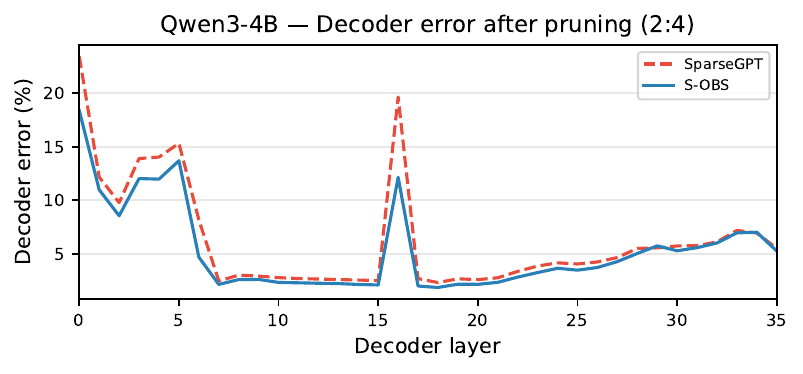}
    \caption{4B decoder error}
    \label{fig:decoder_err_4b}
  \end{subfigure}
  \caption{Qwen3-4B (36 layers): per-projection output error and decoder error
    (2:4 pruning, 1024 C4 calibration samples). Same format as
    \Cref{fig:error_progression}. S-OBS wins on all 252 linear projections.
    The decoder error spike at layer 16 is shared by both methods, suggesting
    an inherently sensitive layer.}
  \label{fig:error_progression_4b}
\end{figure}

\section{Wanda with Block Structure}
\label[appendix]{appendix:wanda}

Wanda~\cite{sun2023wanda} scores each element by combining weight magnitude with input activation norms:
\begin{equation}
  S_{ij}^{\text{Wanda}} = |W_{ij}| \cdot \| \mathbf{X}_{:,j} \|_2
\end{equation}
The block saliency is $S_k^{\text{Wanda}} = \sum_{(i,j) \in \Elements(k)} S_{ij}^{\text{Wanda}}$,
and scope-level pruning applies $k$-sparsity over block scores. This is equivalent to S-OBD
with a diagonal Hessian approximation $H_{jj} \approx \|\mathbf{X}_{:,j}\|_2^2 / N$ and no
weight compensation.

\section{Implementation}
\label[appendix]{appendix:implementation}

We release \texttt{sparsekit}, a PyTorch library that implements \sthree{}.
The documentation is bundled with the submission and can be browsed locally
(see the included \texttt{README.md}).

\subsection{Library Architecture}

\texttt{sparsekit} is organized in three layers that correspond directly to
the three \sthree{} primitives, plus a pruners layer on top.

\paragraph{Layer 1: View (\texttt{sparsekit.view}).}
\texttt{View(param, shape, stride)} is a lightweight dataclass that wraps an
\texttt{nn.Parameter} with an arbitrary $n$-dimensional shape-stride pair.
Data access is via \texttt{torch.as\_strided}, which constructs a zero-copy
view into the parameter's storage; there is no memory allocation.  Crucially,
writes to \texttt{view.data} propagate directly back to the underlying
parameter via the \texttt{@data.setter}, so all in-place pruning operations
are transparent to the rest of the model.

The class exposes two key constructors.  \texttt{View.from\_existing(param)}
wraps a plain \texttt{Parameter} with its physical layout (identity view).
\texttt{View(param, shape, stride)} constructs an arbitrary strided view; the
caller is responsible for ensuring the shape-stride pair is consistent with the
parameter's storage.  The \texttt{View} is the only place where the physical
layout appears; all higher-level code is agnostic to it.

\paragraph{Layer 2: Block Specification (\texttt{sparsekit.block}).}
\texttt{BlockSpec(view, shape)} defines the atomic sparsity unit.  The block
shape $\vb$ must divide the view shape element-wise; the library validates this
at construction time.  The \emph{block grid} (the array of block indices) has
shape $\vs^{(B)} = (s_k / b_k)_{k=0}^{n-1}$ and is computed lazily as a
\texttt{cached\_property}.

\texttt{BlockSpec} provides a rich set of reduction operations (norms, min,
max) that operate over block elements, and index utilities that map between
block grid coordinates and physical parameter positions.  These index mappings
are the key bridge between the abstract specification and actual tensor
indexing in the pruner.

\texttt{BlockCoupling} holds a list of \texttt{BlockSpec}s and exposes them as
a single virtual block whose element set is the union of all member blocks at
the same grid coordinate (after applying the per-tensor permutation).  This
implements \Cref{def:coupled_block}.

\paragraph{Layer 3: Scope Specification (\texttt{sparsekit.scope}).}
\texttt{ScopeSpec(block, shape)} operates \emph{on the block grid}: blocks
are to blocks what elements are to blocks.  The scope shape $\vs$ divides the
block grid shape, yielding a scope grid of shape $(\vs^{(B)}_k / g_k)$.

The key operation is \texttt{hard\_threshold}, which implements $k$-sparsity
within each scope in a fully vectorized manner: it computes a per-scope
$k$-th largest threshold, then zeros out all blocks below that threshold via
an in-place mask applied through the \texttt{View.apply\_mask} write-through
mechanism.  No gather/scatter is needed because the block view is contiguous
in the scope's logical layout.

\texttt{ScopeCoupling} mirrors \texttt{BlockCoupling} at the scope level,
concatenating block saliencies across tensors before ranking.

\paragraph{Layer 4: Pruners (\texttt{sparsekit.pruners}).}
\texttt{StructuredOBS(scope\_spec, H)} implements Algorithm~\ref{alg:scope_obs}
on top of any \texttt{ScopeSpec}.  It accepts the empirical Hessian
$\vH = \frac{1}{N}\mathbf{X}^T\mathbf{X}$ and pre-computes $\vH^{-1}$ (with
damping) once.  Per-row pruning proceeds by:
\begin{enumerate}
  \item Extracting the per-row sub-matrix $C_m \in \R^{K \times K}$ as an
        fp16 clone of $\vH^{-1}$ (halves memory bandwidth).
  \item Scoring each block by $S_j = \frac{1}{2}\vw_j^T [C_m]_{jj}^{-1} \vw_j$.
  \item Zeroing the $|\Scope|_B - k$ lowest-saliency blocks and applying the
        Schur update $C_m \mathrel{-}= C_m[:,I_j] [C_m]_{jj}^{-1} C_m[I_j,:]$
        via \texttt{torch.addmm} (fused, no temporary allocation).
\end{enumerate}
Row-coupled patterns (e.g.\ 16-column block) are handled by a separate code
path that operates on multi-row view chunks and uses sequential Schur updates
with $\mathtt{ng}=1$ to avoid cross-row contamination.
\subsection{Dynamic Sparse Training}
\label[appendix]{appendix:dynamic_sparse}

The experiments in the main paper focus on zero-shot (post-training) pruning, where a
pre-trained model is sparsified without retraining. \sthree{} also supports dynamic sparse
training, where the sparsity mask evolves during training while maintaining a fixed
sparsity budget.

In dynamic sparse training~\cite{mocanu2018scalable,evci2020rigging}, the training loop
alternates between standard gradient updates on the active (non-zero) parameters and
periodic mask updates that prune low-saliency connections and regrow new ones. \sthree{}
provides the structural backbone for this process: the View, Block, and Scope
specifications define which parameters compete for retention, the saliency criteria
(S-OBD or S-OBS) determine which blocks to prune, and the regrowth step activates new
blocks within the same block structure.

The regrowth policy can target blocks with high gradient magnitude (RigL~\cite{evci2020rigging})
or random blocks (SET~\cite{mocanu2018scalable}); in both cases the block-level saliency scores
from \Cref{sec:integrations} serve directly as the pruning criterion, and the Scope constraint
ensures that the budget $k$ is respected after every update step.
Dynamic sparse training with \sthree{}-structured masks is left for future work.

\subsection{Expressing the Four Experimental Patterns}

\begin{lstlisting}[style=pythonstyle]
from sparsekit import View, BlockSpec, ScopeSpec, StructuredOBS

# 2:4 sparsity  (W: M x K, keep 2 of 4 consecutive columns)
v     = View.from_existing(W)
block = BlockSpec(v, shape=(1, 1))
part = ScopeSpec(block, shape=(1, 4))

# 4:8 sparsity  (2-column blocks, keep 2 of 4 blocks)
v     = View.from_existing(W)
block = BlockSpec(v, shape=(1, 2))
part = ScopeSpec(block, shape=(1, 4))

# Coupled 2:4  (pair columns 8 apart within each 16-col segment)
v     = View(W, shape=(M, K//16, 8, 2), stride=(K, 16, 1, 8))
block = BlockSpec(v, shape=(1, 1, 1, 2))
part = ScopeSpec(block, shape=(1, 1, 4, 1))

# 16-column block  (rows 8 apart share one column mask)
v     = View(W, shape=(8, 2, K), stride=(K, 8*K, 1))
block = BlockSpec(v, shape=(1, 1, 16))
part = ScopeSpec(block, shape=(1, 2, 1))

# Pruning: same call for all patterns
H   = (X.T @ X) / X.shape[0]          # empirical Hessian (K x K)
obs = StructuredOBS(part, H)
obs.prune_true_obs(nnz=k)              # in-place, writes through View
\end{lstlisting}

The identical \texttt{prune\_true\_obs} call works for all four patterns; all
structural differences are encoded in \texttt{part}.  This is the concrete
realization of the claim that \sthree{} decouples the sparsity structure from
the pruning algorithm.
